\def\eqref#1{equation~\ref{#1}}
\def\1{\bm{1}}
\DeclareMathAlphabet{\mathsfit}{\encodingdefault}{\sfdefault}{m}{sl}
\SetMathAlphabet{\mathsfit}{bold}{\encodingdefault}{\sfdefault}{bx}{n}
\setlist[itemize]{topsep=0em, itemsep=0em, partopsep=0em, parsep=0.5em}
\newtheoremstyle{custom}
  {0.5em}   
  {0.2em}   
  {}        
  {}        
  {\bfseries} 
  {.}       
  {.5em}    
  {}        
\theoremstyle{custom}
\newtheorem{theorem}{Theorem}
\newaliascnt{proposition}{theorem}
\newtheorem{proposition}[proposition]{Proposition}
\newtheorem{definition}{Definition}
\newtheorem{remark}{Remark}
\crefname{definition}{Definition}{Definitions}
\crefname{assumption}{Assumption}{Assumptions}
\crefname{theorem}{Theorem}{Theorems}
\crefname{remark}{Remark}{Remarks}
\crefname{lemma}{Lemma}{Lemmas}
\crefname{corollary}{Corollary}{Corollaries}
\crefname{proposition}{Proposition}{Propositions}
\crefname{section}{Section}{Sections}
\crefname{subsection}{Subsection}{Subsections}
\crefname{example}{Example}{Examples}
\crefname{table}{Table}{Tables}
\crefname{problem}{Problem}{Problems}
\crefname{algorithm}{Algorithm}{Algorithms}
\crefname{figure}{Figure}{Figures}
\crefname{property}{Property}{Properties}
\definecolor{darkgrey}{rgb}{0.53,0.53,0.53}
\definecolor{middlegrey}{rgb}{0.75,0.75,0.75}
\definecolor{mygrey}{rgb}{0.9,0.9,0.9}
\definecolor{mydarkblue}{rgb}{0,0.08,0.45}
\definecolor{darkdarkblue}{rgb}{0.0,0.0,0.3}
\definecolor{darkblue}{rgb}{0.0,0.0,0.7}
\definecolor{darkred}{rgb}{0.4,0,0.3}
\definecolor{lightblue}{HTML}{F9FEFE}
\definecolor{lightred}{HTML}{FFFAFA}
\definecolor{fancyblue}{HTML}{4771E3}
\definecolor{grey}{rgb}{0.95,0.95,0.95}
\definecolor{myorange}{HTML}{FFDD81}
\title{DICE: Data Influence Cascade in Decentralized Learning}
\author{Tongtian Zhu 
, Wenhao Li \& Can Wang \\
Zhejiang University\\
\texttt{\{raiden,wenhao-li,wcan\}@zju.edu.cn} \\
\And
Fengxiang He \\
University of Edinburgh \\
\texttt{F.He@ed.ac.uk} \\
}
\begin{document}

\maketitle

\begin{abstract}
Decentralized learning offers a promising approach to crowdsource data consumptions and computational workloads across geographically distributed compute interconnected through peer-to-peer networks, accommodating the exponentially increasing demands. However, proper incentives are still in absence, 
considerably discouraging participation. 
Our vision is that a fair incentive mechanism relies on fair attribution of contributions to participating nodes, which faces non-trivial challenges arising from 
the localized connections making influence ``cascade'' in a decentralized network. To overcome this, we design the first  method to estimate \textbf{D}ata \textbf{I}nfluence \textbf{C}ascad\textbf{E} (DICE) in a decentralized environment. 
Theoretically, the framework derives tractable approximations of influence cascade over arbitrary neighbor hops, suggesting 
the influence cascade is determined by an interplay of data, communication topology, and the curvature of loss landscape.
DICE also lays the foundations for applications 
including selecting suitable collaborators and identifying malicious behaviors.
Project page is available at
   \href{https://raiden-zhu.github.io/blog/2025/DICE/}{\faLink~DICE}.
\end{abstract}
\section{Introduction}
\label{introduction}

Large language models (LLMs) have seen remarkable progress in recent years 
\citep{guo2025deepseek, 
openai2024learning, 
dubey2024llama, 
google2024gemini, 
anthropic2024claude}, surpassing human on key benchmarks \citep{aiindex2024}.
The compute scaling, highlighted by \citet{ho2024algorithmic} as a major reason of the successes, is estimated by Epoch AI to increase four to five times annually in cutting-edge models \citep{epoch2024training}.
This dramatic computational demand requires substantial financial investments; for example, training OpenAI’s GPT-4 requires approximately $\$$78 million in compute costs \citep{aiindex2024}.
Such exorbitant expenses are far beyond the affordability of most smaller players, making tech giants increasingly dominant. 

Currently, large-scale training and inference processes are primarily performed in expensive data centers.
Decentralized training, echoing swarm intelligence \citep{bonabeau1999swarm,  MAVROVOUNIOTIS20171}, offers a cost-efficient alternative avenue by crowd-sourcing computational workload to 
decentralized compute nodes \citep{yuan2022decentralized, jaghouar2024intellect}.
One notable example showcasing decentralized computing's computational potential is the Bitcoin eco-system which virtually distributes jobs requiring instantaneous 16 GW power consumption \citep{CBECI}. 

Despite profound potential advantages, contributing to decentralized training incurs non-negligible costs for participants, raising a natural question: \textit{What motivates edge participants to engage in decentralized training?}
\begin{figure*}[t!]
  \centering
\includegraphics[width=1\linewidth]{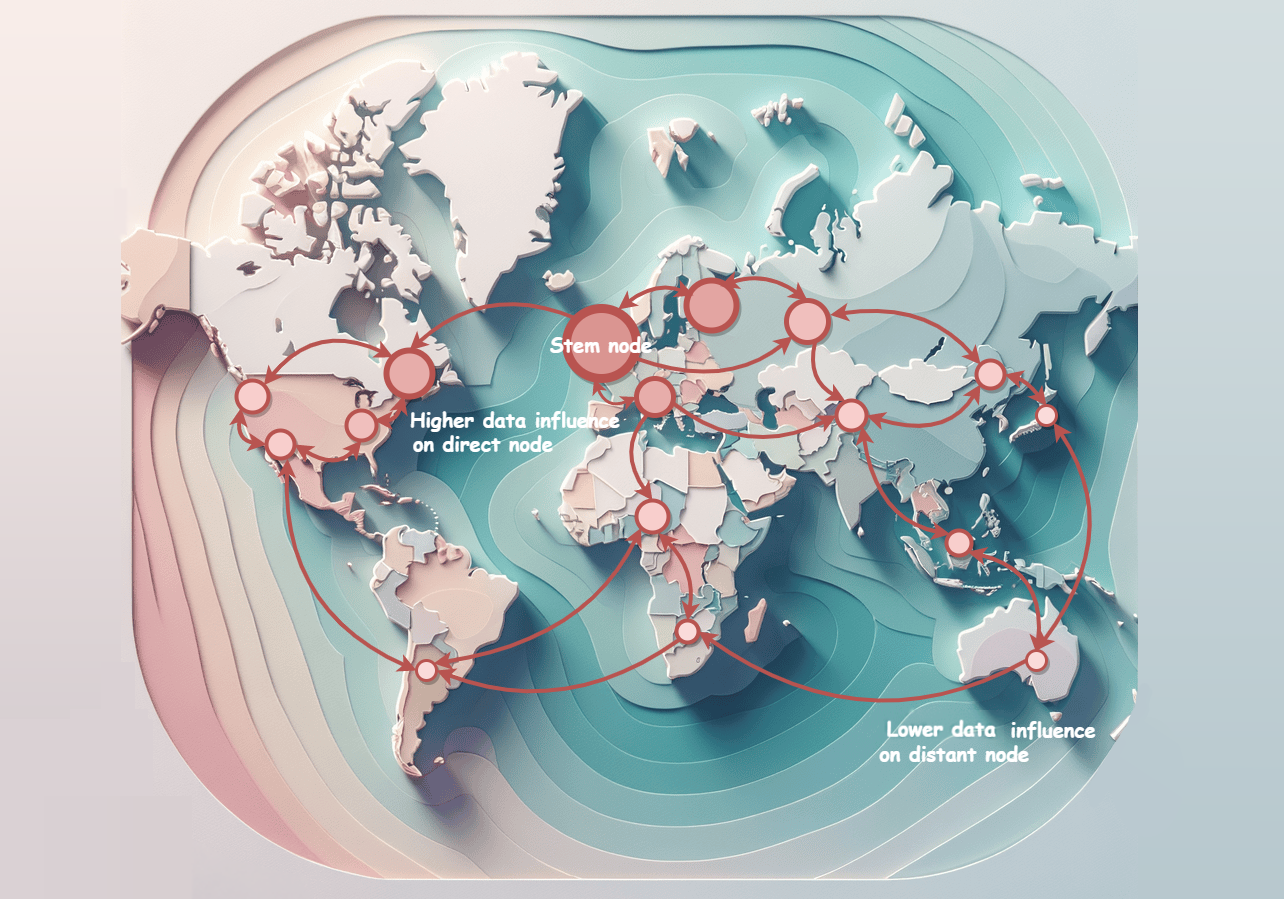}  
  \caption{A semantic visualization of influence cascade in decentralized learning with ResNet-18 on CIFAR-10. The illustration depicts a 16-node communication topology (see \cref{fig: cascade-cifar10-heatmap-mainfigure}), where node sizes represent DICE-E influence scores (see \cref{th:dice_e_r}). Influence originates from the stem node and propagates through the network, weakening over distance, akin to ``ripples in water". This highlights how data contribution extends beyond local nodes, shaped by the communication topology.}
\label{fig:influence_cascade}
\vspace{-10pt}
\end{figure*}
Game theory suggests that when appropriate incentives exist, self-interested (rational) players can be keen to contribute for socially desirable outcomes. 
It is thus essential to design a proper incentive mechanism to unleash the collectively massive computational potential of decentralized nodes. 
Our vision is that such incentive mechanism relies on accurate quantification of contributions from players. 
This leads to the following  problem:

\begin{tcolorbox}[notitle, rounded corners, colframe=middlegrey, colback=lightred, 
       boxrule=2pt, boxsep=0pt, left=0.15cm, right=0.17cm, enhanced, 
       toprule=2pt, halign=center
    ]
\textit{How to quantify individual contributions in decentralized learning?}
\end{tcolorbox}

Quantifying the contributions (or ``influence'') to the learned model has been well studied in the centralized paradigm 
\citep{koh2017understanding, NEURIPS2020_e6385d39}. 
However, it is still largely untouched in understanding and measuring data influence in fully decentralized environments. 
Unlike centralized scenarios, where data influence is computed on a single model statically,
decentralized learning relies on localized, indirect communications on a perhaps sparse network -- the influence of data on one node impacts its own model, propagates to its neighbors through iterative parameter exchanges, and cascades to multi-hop neighbors. We term this mechanism as {\it cascading influence} (see \cref{fig:influence_cascade}). 
Existing data influence estimators tailored for centralized settings assume that influence is computed within a single model and do not account for its recursive propagation through parameter exchange. As a result, they are not applicable to decentralized learning, where data  influence extends beyond direct neighbors to multi-hop neighbors.

To address this challenge, we propose a \textbf{D}ata \textbf{I}nfluence \textbf{C}ascad\textbf{E} (DICE), the first work for measuring the influence in a decentralized learning environment. We make contributions as follows:
\begin{itemize}[leftmargin=*]
    \item \textbf{Conceptual contributions}: DICE introduces the concept of a ground-truth data influence for decentralized learning, integrating direct and indirect contributions to capture influence propagation across multiple hops during training (see \cref{df:influence_r_order}).
    \item \textbf{Theoretical contributions}: Building on this foundation, we derive tractable approximations of ground-truth DICE for an arbitrary number of neighbor hops, establishing a foundational framework to systematically characterize the flow of influence across decentralized networks. These theoretical results uncover, for the first time, that data influence in decentralized learning extends beyond the data itself and the local model, as seen in centralized training. Instead, it is a joint product of three critical factors: the original data, the topological importance of the data keeper, and the curvature information of intermediate nodes mediating propagation (see \cref{th:dice_e_r}). 
\end{itemize}

We anticipate that our DICE framework will pave the way for novel incentive mechanism designs and the establishment of economic opportunities for decentralized learning, such as data and parameter markets. DICE also holds significant potential to address critical challenges of identifying new suitable collaborators, and detecting free-riders. We envision these applications will contribute to a scalable, autonomous, and reciprocal  decentralized learning eco-system.

\section{Related work}
\label{related_work}

\textbf{Data Influence Estimation.}
As high-quality data becomes increasingly critical in modern machine learning \citep{NEURIPS2022_c1e2faff, penedo2023the, li2023textbooks,  pmlr-v235-villalobos24a}, understanding its influence has emerged as a key research direction \citep{NEURIPS2022_7b75da9b, grosse2023studying}. Data influence estimation quantifies the contribution of training data to model predictions \citep{chen2024unravelingtheimpact, ilyas2024data}, supporting incentive mechanisms and applications in few-shot learning \citep{Park_2021_ICCV}, dataset pruning \citep{yang2023dataset}, distillation \citep{pmlr-v202-loo23a}, fairness \citep{pmlr-v162-li22p}, machine unlearning \citep{NEURIPS2021_9627c45d}, explainability \citep{pmlr-v70-koh17a, grosse2023studying}, and security \citep{236234, 10.1145/3548606.3559335}.

Existing methods fall into static and dynamic categories. Static approaches, including retraining-based methods (e.g., leave-one-out \citep{660e20f5-c8ad-3626-b8b0-51014625ed30}, Shapley value \citep{shapley1953}, Datamodels \citep{pmlr-v162-ilyas22a}) and one-point methods (e.g., influence functions \citep{pmlr-v70-koh17a}), estimate influence post-training. While theoretically grounded, these methods cannot characterize dynamic influence during training.
Dynamic approaches address this limitation by tracking model parameter evolution \citep{NEURIPS2019_c61f571d}. Notable methods include TracIn \citep{NEURIPS2020_e6385d39} and In-Run Data Shapley \citep{wang2024data}, which average gradient similarities over time. Recent advances \citep{NEURIPS2023_550ab405} leverage memory-perturbation equations to extend dynamic influence estimation to various optimization algorithms. For a more detailed background, please refer to \cref{bg:influence}.

However, existing methods primarily focus on centralized training. To the best of our knowledge, the most closely related work is by \citet{terashita2022decentralized}, who propose a decentralized hyper-gradient method and offer novel insights into using hyper-gradients to compute data influence. Nevertheless, their estimation method is static and cannot capture the influence cascade in decentralized training.  In contrast, our framework, DICE, is specifically designed for fully decentralized environments, providing a fine-grained characterization of influence propagation unique to these settings.

\textbf{Incentivized Decentralized Learning}. 
Most existing incentive mechanisms for collaborative learning are designed for federated learning \citep{zeng2021comprehensive}. For instance, \citet{wang2023framework} propose an Incentive Collaboration Learning (ICL) framework to promote collaboration. Their focus is on mechanism design rather than the precise quantification of individual contributions. 
In federated learning, the Shapley value has been effectively utilized to quantify participant contributions \citep{pmlr-v89-jia19a, pmlr-v97-ghorbani19c, 9006179, Wang2020}. Our approach differs fundamentally in two key aspects: first, we focus on fully decentralized settings without central servers, although our framework supports federated learning scenarios (see \cref{rk: algorithm}); second, our work considers influence cascade between participants, an completely new perspective that has not been explored in existing literature.
Regrading decentralized learning, we are only aware of the work by
\citet{yu2023idml} presenting a blockchain-based incentive mechanism for fully decentralized learning. However, their mechanism relies on smart contracts and differs from ours. 
\section{Notations and Preliminaries}
\label{preliminaries}
This section introduces notations and essential preliminaries for decentralized learning. 
This work focuses on the most studied form of decentralized learning: data parallelism with only peer-level communication.
For more detailed background, please refer to \cref{bg:decentralized}.

We consider a general \textit{personalized distributed optimization problem} over a connected graph  $G = (\mathcal{V}, \mathcal{E})$, where $\mathcal{V}$ represents the set of participants and $\mathcal{E}$ denotes the communication links between them. The participants collaboratively minimize a weighted sum of local personalized objectives \citep{NEURIPS2020_f4f1f13c, hanzely2020federated}:
\begin{align}
\min_{\boldsymbol{\theta} = \{\boldsymbol{\theta}_k \in \mathbb{R}^d\}_{k \in \mathcal{V}}} \left[ L(\boldsymbol{\theta}) \triangleq \sum_{k \in \mathcal{V}} q_k L_k(\boldsymbol{\theta}_k) \right],
\end{align}
where $ q_k \geq 0 $ with $\sum_{k \in \mathcal{V}} q_k = 1$, and each local objective $ L_k(\boldsymbol{\theta}_k) = \mathbb{E}_{\boldsymbol{z}_k \sim \mathcal{D}_k} \left[ L(\boldsymbol{\theta}_k; \boldsymbol{z}_k) \right]$  is defined by the expectation over the local data distribution  $\mathcal{D}_k $. 
Empirical risk minimization involves optimizing the sample average approximation:
\begin{align}\label{eq:objective}
\hat{L}(\boldsymbol{\theta}) = \sum_{k \in \mathcal{V}} q_k \hat{L}_k(\boldsymbol{\theta}_k) \quad \text{where} \quad \hat{L}_k(\boldsymbol{\theta}_k) = \frac{1}{n_k} \sum_{i=1}^{n_k} L(\boldsymbol{\theta}_k; \boldsymbol{z}_{k_i}).
\end{align}
Here, $n_k$ is the number of samples in participant $ k $, and  $\{ \boldsymbol{z}_{k_i} \}_{i=1}^{n_k}$ are drawn from  $\mathcal{D}_k$.

\begin{figure*}[ht!]
  \centering
\includegraphics[width=1\linewidth]{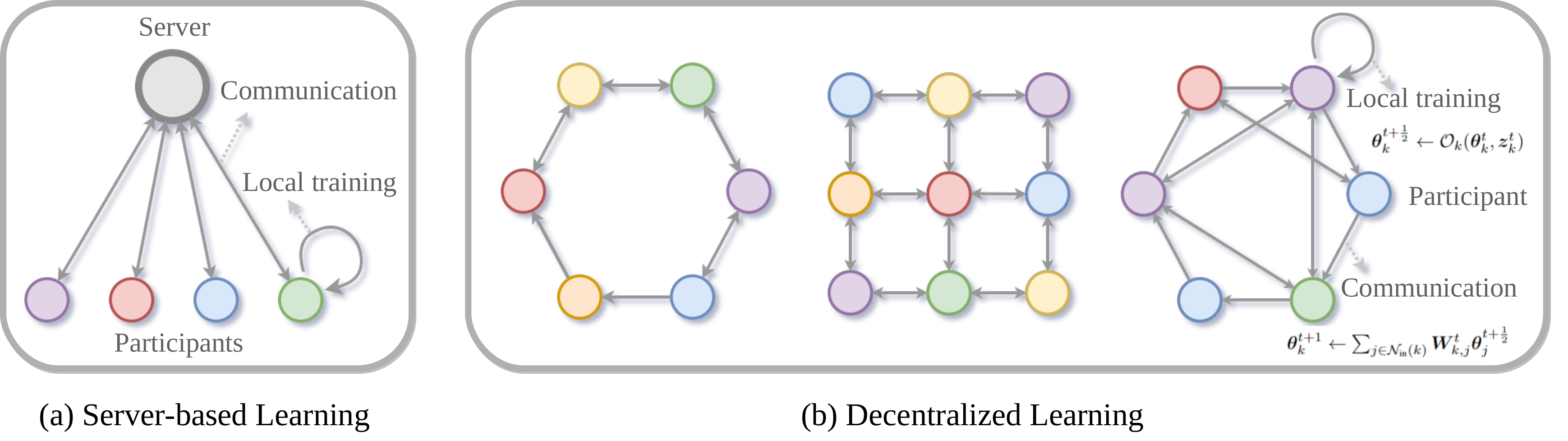}  
  \caption{A comparative illustration of server-based learning versus decentralized learning.}
\label{fig:topology_main}
\end{figure*}

Decentralized learning aims to minimize the global objective $l(\boldsymbol{\theta}) = \sum_{k \in \mathcal{V}} q_k l_k(\boldsymbol{\theta}_k)$ with only local computations and gossip communications among neighboring participants \citep{1104412, nedic2009distributed}.
The communication protocol is governed by a weighted adjacency matrix $\boldsymbol{W} \in [0,1]^{n \times n}$, where $\boldsymbol{W}_{k,j} \geq 0$ represents the strength of connection from participant $j$ to participant $k$, with $\boldsymbol{W}_{k,j} > 0$ if $(k, j) \in \mathcal{E}$. 
This matrix characterizes the communication topology, thereby defining how information propagates through the network (see \cref{fig:topology}).
In this paper, $\boldsymbol{W}$ is designed to be row-stochastic, satisfying $\sum_{j=1}^n \boldsymbol{W}_{k,j} = 1$ for all $i \in \mathcal{V}$\footnote{The weighted adjacency matrix $\boldsymbol{W}$ is typically assumed to be doubly-stochastic. The row-stochastic assumption is weaker, yet the convergence of decentralized SGD is still guaranteed \citep{8491372, Xin2019}.}.
Decentralized learning alternates between local optimization and gossip-based parameter aggregation, as shown below:

\begin{algorithm}[ht!]
\caption{Decentralized Learning with Flexible Gossip and Optimization}\label{rk: algorithm}
\begin{algorithmic}[1]
\Require $G = (\mathcal{V}, \mathcal{E})$, $\{\boldsymbol{\theta}_k^{0}\}_{k \in \mathcal{V}}$, optimizer $\mathcal{O}_k$, number of communication rounds $T$, and mixing matrix distributions $\mathcal{W}^{t}\ ( \forall t \in [T])$
\For{$t = 1$ to $T$} \textbf{in parallel for all} participants $k \in \mathcal{V}$
    \State \colorbox{cyan!10}{\textbf{Local Update:}}
    \State  Sample $\boldsymbol{z}_k^{t}\sim \mathcal{D}_k$, update parameters with optimizer $\mathcal{O}_k$: 
    \(
    \boldsymbol{\theta}_k^{t+\frac{1}{2}} \gets \mathcal{O}_k(\boldsymbol{\theta}_k^{t}, \boldsymbol{z}_k^{t})
    \)
    \State \colorbox{green!10}{\textbf{Gossip Averaging:}}
    \State Send $\boldsymbol{\theta}_k^{t+\frac{1}{2}}$ to $\{l \mid \boldsymbol{W}_{l,k} > 0\}$ and receive $\boldsymbol{\theta}_j^{t+\frac{1}{2}}$ from $\{j \mid \boldsymbol{W}_{k,j} > 0\}$.
    \State Sample $\boldsymbol{W}^{t} \sim \mathcal{W}^{t}$, perform gossip averaging:
    \(
    \boldsymbol{\theta}_k^{t+1} \gets \sum_{j \in \mathcal{N}_{\text{in}}(k)} \boldsymbol{W}_{k,j}^{t} \boldsymbol{\theta}_j^{t+\frac{1}{2}}
    \)
\EndFor \textbf{End for}
\end{algorithmic}
\end{algorithm}

\begin{remark}
    \cref{rk: algorithm} provides a flexible framework for decentralized learning with arbitrary optimizers and randomized gossip. A special case of this framework is decentralized stochastic gradient descent (DSGD) \citep{on_the-convergence, NIPS2017_f7552665, pmlr-v119-koloskova20a}, where the optimizer $\mathcal{O}_k$ performs a simple stochastic gradient step:
    \(
    \boldsymbol{\theta}_k^{t+\frac{1}{2}} \gets \boldsymbol{\theta}_k^{t} - \eta^{t} \nabla L(\boldsymbol{\theta}_{k}^{t}; \boldsymbol{z}_k^t),
    \)
    with $\eta^{t}$ being the learning rate. 
    Another notable special case is FedAVG \citep{pmlr-v54-mcmahan17a}, which corresponds to the standard server-based learning setting, where a central server collects and averages model updates from all participants in each round. Mathematically, this is equivalent to using a fully connected and uniform mixing matrix in \cref{rk: algorithm}, i.e., $\boldsymbol{W}_{k,j}^{t} \equiv \frac{1}{n} \mathbf{1} \mathbf{1}^T$ (see \cref{fig:topology_main} for a visual comparison and its connection to decentralized learning). Therefore, the framework is applicable to both federated and decentralized learning paradigms, although our primary focus remains on fully decentralized learning without a central server.
\end{remark}
\section{Data Influence Cascades}
\label{methodology}

In this section, we introduce DICE, a comprehensive framework for measuring data influence in decentralized environments. \cref{sec:groundtruth} introduces the ground-truth influence measures designed for decentralized learning and \cref{sec:estimation} provides their dynamic gradient-based estimations.

\subsection{Ground-truth Influence in Decentralized Learning}\label{sec:groundtruth}
To ensure a logical and coherent flow, we first introduce the fundamental concepts of data influence in centralized settings and then discuss the significant challenges involved in extending these ideas to decentralized environments.
In conventional centralized setups, the influence of an individual data instance can be assessed by evaluating the counterfactual change in learning performance  through leave-one-out  retraining (LOO) \citep{660e20f5-c8ad-3626-b8b0-51014625ed30}, defined as follows:
\begin{definition}[Leave-one-out Influence]
    \begin{align}\label{loo}
    \mathcal{I}_{\text{LOO}}
    (\boldsymbol{z},\boldsymbol{z}^{\prime}) = &L(\boldsymbol{\theta^*}; \boldsymbol{z}^{\prime}) - L(\boldsymbol{\theta^*}_{\setminus z}; \boldsymbol{z}^{\prime} ),
    \end{align}
where $\boldsymbol{z}$ denotes the training data instance under influence assessment, $\boldsymbol{z}^{\prime}$ is the loss-evaluating instance, $\boldsymbol{\theta^*}$ and $\boldsymbol{\theta^*}_{\setminus \boldsymbol{z}}$ are the models trained on the entire dataset $\mathcal{S}$ and  $\mathcal{S}\setminus \{z\}$, respectively.
\end{definition}
Intuitively, \cref{loo} quantifies the influence of $\boldsymbol{z}$ by its individual impact on test loss reduction. A smaller LOO value indicates a significant contribution to learning, which aligns with the concept that the data influence is reflected in its ability to enhance model performance.
LOO influence is often considered as the ``gold standard'' for evaluating how well influence estimators approximate the ground-truth influence in the data influence literature \citep{pmlr-v70-koh17a, basu2021influence}. 

However, extending LOO to decentralized scenarios introduces non-trivial challenges due to the distributed nature and the localized connections in decentralized learning, reflected in \cref{eq:objective} and \cref{rk: algorithm}. 
In centralized setups, the core idea of LOO is to link data influence to variations in loss or parameter outcomes. In contrast, decentralized learning systems involve multiple participants sharing model parameters through inter-participant communications.
As a result, alterations in model parameters caused by a data-level modification propagate throughout the whole network. 

A natural way to measure the influence of one participant in such collaborative environments is through evaluating its contribution to the whole community \citep{Wang2020, 10.1145/3375627.3375840}, which aligns with the customer-centric principle \citep{Drucker} in determining value\footnote{Peter Drucker's customer-centric value principle from \textit{Innovation and Entrepreneurship} states, ``Quality in a product or service is not what the supplier puts in. It is what the customer gets out of it.”.}. 
In decentralized learning, when a participant transmits its training assets (e.g., model parameters or gradients) to neighboring participants—akin to offering a product—the recipients derive possible utility from these training assets and may provide reciprocal feedback, such as sharing their own assets in return. This dynamic positions the neighbors as ``customers", thereby entrusting them with the rights to determine the value of the assets provided by the supplier.
With these insights in mind, we recognize that assessing data influence in decentralized scenarios is far more complex, as summarized below:

\begin{tcolorbox}[notitle, rounded corners, colframe=middlegrey, colback=lightblue, 
       boxrule=2pt, boxsep=0pt, left=0.15cm, right=0.17cm, enhanced, 
       toprule=2pt,
    ]
\textbf{Key observations}: \textit{In decentralized learning, \\
1) neighbors who serves as customers hold the rights to determine data influence;\\
2) data influence is not static but spreads across participants through gossips during training.}
\end{tcolorbox}

Unfortunately, existing static estimators only calculate the loss change after training and thus cannot characterize the dynamic transmission of data influence within the whole decentralized learning community.
Based on the above discussion, we posit that a ``gold-standard” influence measure in decentralized scenarios should satisfy the following requirements:
\begin{itemize}[leftmargin=*]
    \item Quantify community-level influence: Measure the impact of training data instances on the collective utility of the community.
    \item Depend on training dynamics: Measure the influence based on the training process to characterize the propagation of influence on decentralized networks.
\end{itemize}

In the following, we introduce  the ground-truth influence measures  tailored to the requirements of decentralized environments, termed as the \textit{ground-truth influence cascade (DICE-GT)}.
\begin{definition}[One-hop Ground-truth Influence]\label{df:1_order_influence} 
The one-hop DICE-GT value quantifies the influence of a data instance $\boldsymbol{z}_j^t$ from participant $j$ on a loss-evaluating instance $\boldsymbol{z}^{\prime}$ within itself and its immediate neighbors. Formally, for a given participant $j \in \mathcal{V}$:  \begin{align}
    \mathcal{I}_{\text{DICE-GT}}^{(1)}(\boldsymbol{z}_j^t, \boldsymbol{z}^{\prime}) = \underbrace{q_j\left(L( \boldsymbol{\theta}^{t+\frac{1}{2}}_j; \boldsymbol{z}^{\prime} ) - L(\boldsymbol{\theta}^t_j; \boldsymbol{z}^{\prime})\right)}_{\text{direct marginal contribution of $\boldsymbol{z}_j^t$ to } j} 
    + \underbrace{\sum_{k \in \mathcal{N}_{\text{out}}^{(1)}(j)} q_k\left(L( \boldsymbol{\theta}^{t+1}_k; \boldsymbol{z}^{\prime}) - L( \boldsymbol{\theta}^{t+1}_{k \setminus\boldsymbol{z}_j^{t}}; \boldsymbol{z}^{\prime} )\right)}_{\text{indirect marginal contribution of $\boldsymbol{z}_j^t$ to one-hop neighbors}},\nonumber
    \end{align}
where $\boldsymbol{\theta}^{t+\frac{1}{2}}_j$ denotes the updated model parameters of $j$ after training on $\boldsymbol{z}_j^t$ at iteration $t$ (see \cref{rk: algorithm}). For each one-hop out-neighbor $k \in \mathcal{N}_{\text{out}}^{(1)}(j)$, $\boldsymbol{\theta}^{t+1}_k$ denotes the averaged model parameters after receiving updated parameters $\{\boldsymbol{\theta}^{t+\frac{1}{2}}_l|\boldsymbol{W}_{k,l}> 0\}$ influenced by $\boldsymbol{z}_j^t$, while $\boldsymbol{\theta}^{t+1}_{k \setminus \boldsymbol{z}_j^t}$ represents the model parameters of $k$ without the influence from $\boldsymbol{z}_j^t$, i.e., $\boldsymbol{\theta}^{t+1}_{k \setminus \boldsymbol{z}_j^t} = \sum_{l \in \mathcal{N}_{\text{out}}(k)\setminus j} \boldsymbol{W}_{k,l}^t \boldsymbol{\theta}_l^{t+\frac{1}{2}}+\boldsymbol{W}_{k,j}^t \boldsymbol{\theta}_l^{t}$.
\end{definition}

\vspace{-0.5em}
The economic intuition behind the DICE-GT value is that it captures both the direct marginal contribution of a data instance to itself and its subsequent impact on immediate neighbors.
Specially, the first term $L(\boldsymbol{z}^{\prime} ; \boldsymbol{\theta}^{t+\frac{1}{2}}_j) - L(\boldsymbol{z}^{\prime} ; \boldsymbol{\theta}^t_j)$ captures the  inter-node direct influence of training data instance $\boldsymbol{z}_j^t$ on the test loss change at node $j$, which corresponds to the TracInIdeal influence in \citep{NEURIPS2020_e6385d39} designed for centralized scenarios. 
The second term $\sum_{k \in \mathcal{N}_{\text{out}}^{(1)}(j)} (L(\boldsymbol{z}^{\prime} ; \boldsymbol{\theta}^{t+1}_k) - L(\boldsymbol{z}^{\prime} ; \boldsymbol{\theta}^{t}_{k \setminus \boldsymbol{z}_j^t}))$ measures the intra-node influence unique in decentralized learning, which aggregates the indirect influences on all one-hop neighbors, i.e., direct neighbors, of node $j$. 

In decentralized learning environments, data influence propagates not only to immediate neighbors but also to multi-hop neighbors through the communication topology. To characterize this multi-hop influence, we extend the ground-truth influence cascade measure to arbitrary $r$-hop neighbors. 

\begin{definition}[Multi-hop Ground-truth Influence]\label{df:influence_r_order}
    The multi-hop DICE-GT value quantifies the cumulative influence of a data instance $\boldsymbol{z}$ on a loss-evaluating instance $\boldsymbol{z}^{\prime}$ across all nodes within $r$-hop neighborhoods of participant $j$. Formally, for a given participant $j \in \mathcal{V}$:
    \begin{align}
    \mathcal{I}_{\text{DICE-GT}}^{(r)}(\boldsymbol{z}_j^t, \boldsymbol{z}^{\prime}) = q_j\left(L(\boldsymbol{\theta}^{t+\frac{1}{2}}_j;\boldsymbol{z}^{\prime} ) - L(\boldsymbol{\theta}^t_j;\boldsymbol{z}^{\prime} )\right) 
    + \sum_{s=1}^{r} \sum_{k \in \mathcal{N}_{\text{out}}^{(s)}(j)} q_k\left(L( \boldsymbol{\theta}^{t + s}_k;\boldsymbol{z}^{\prime}) - L(\boldsymbol{\theta}^{t+s}_{k \setminus \boldsymbol{z}_j^t};\boldsymbol{z}^{\prime} )\right), \nonumber
    \end{align}
    where $\mathcal{N}_{\text{out}}^{(s)}(j)$ denotes the set of $s$-hop out-neighbors of $j$ (please refer to \cref{bg:high-order} for details of high-order neighbors). Here $\boldsymbol{\theta}^{t+s}_{k}$ and $\boldsymbol{\theta}^{t+s}_{k \setminus \boldsymbol{z}_j^t}$ represents the parameters of node $k$ at iteration $t+s$ when the influence from $\boldsymbol{z}_j^t$ are included and excluded, respectively. 
\end{definition}
Analogous to Definition \ref{df:1_order_influence}, the first term captures the direct influence of data $\boldsymbol{z}_j^t$ on the loss at node $j$. The subsequent summation aggregates the indirect influences on all multi-hop neighbors up to $r$ steps away from node $j$\footnote{We note that the future effect of $\boldsymbol{z}_j^t$ on its local model $\boldsymbol{\theta}_j^{t+s}$ over $s=2,\dots, r$ iterations is included within the indirect influence terms, as a node is inherently considered part of its own arbitrary-order neighbor.}.
The reason to measure test loss change at the \(t+s\) step is that the impact of \(\boldsymbol{z}_j^t\) propagating to \(k \in \mathcal{N}_{\text{out}}^{(s)}(j)\) requires \(s\) steps.
This layered formulation accounts for the multi-hop cascading effects through the network up to the specified order $r$.

\subsection{Dynamic Gradient-based Estimations}\label{sec:estimation}
To meet the second aforementioned requirement of decentralized learning, we design dynamic gradient-based estimators for DICE-GT, called the \textit{influence cascade estimations (DICE-E)}.
\begin{tcolorbox}[notitle, rounded corners, colframe=middlegrey, colback=lightblue, 
       boxrule=2pt, boxsep=0pt, left=0.15cm, right=0.17cm, enhanced, 
       toprule=2pt,
    ]
\begin{proposition}[Approximation of One-hop DICE-GT]\label{th:dice_e_1}
The one-hop DICE-GT value (see \cref{df:1_order_influence}) can be linearly approximated as follow:
\begin{align}
\mathcal{I}_{\text{DICE-E}}^{(1)}(\boldsymbol{z}_j^t, \boldsymbol{z}^{\prime})
=&\; -q_j\, \nabla L(\boldsymbol{\theta}_j^t; \boldsymbol{z}^{\prime})^\top  \Delta_j(\boldsymbol{\theta}_j^t,\boldsymbol{z}_j^t)- \sum_{k \in \mathcal{N}_{\text{out}}^{(1)}(j)} q_k\, \boldsymbol{W}_{k,j}^t\, \nabla L(\boldsymbol{\theta}_k^{t+1}; \boldsymbol{z}^{\prime})^\top \Delta_j(\boldsymbol{\theta}_j^t,\boldsymbol{z}_j^t),\nonumber
\end{align}
where $\Delta_j(\boldsymbol{\theta}_j^t,\boldsymbol{z}_j^t) = \mathcal{O}_j(\boldsymbol{\theta}_j^t,\boldsymbol{z}_j^t)-\boldsymbol{\theta}_j^t$. The proof is included in \cref{sec:proof_th1}.
\end{proposition}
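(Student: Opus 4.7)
The plan is to linearize each loss-difference appearing in the definition of $\mathcal{I}_{\text{DICE-GT}}^{(1)}$ via a first-order Taylor expansion, exploiting the fact that one optimizer step produces a parameter perturbation $\Delta_j(\boldsymbol{\theta}_j^t,\boldsymbol{z}_j^t)$ whose norm is $O(\eta^t)$ and therefore small. Smoothness of $L(\cdot;\boldsymbol{z}^{\prime})$ (e.g., a locally Lipschitz gradient) is only needed to control the quadratic remainder.

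First I would handle the direct marginal contribution. The very definition $\boldsymbol{\theta}_j^{t+\frac{1}{2}} = \mathcal{O}_j(\boldsymbol{\theta}_j^t,\boldsymbol{z}_j^t) = \boldsymbol{\theta}_j^t + \Delta_j(\boldsymbol{\theta}_j^t,\boldsymbol{z}_j^t)$ combined with a first-order Taylor expansion of $L(\cdot;\boldsymbol{z}^{\prime})$ around $\boldsymbol{\theta}_j^t$ yields
\begin{equation*}
L\bigl(\boldsymbol{\theta}_j^{t+\frac{1}{2}};\boldsymbol{z}^{\prime}\bigr) - L\bigl(\boldsymbol{\theta}_j^t;\boldsymbol{z}^{\prime}\bigr) = \nabla L(\boldsymbol{\theta}_j^t;\boldsymbol{z}^{\prime})^{\top}\Delta_j(\boldsymbol{\theta}_j^t,\boldsymbol{z}_j^t) + O\bigl(\|\Delta_j\|^{2}\bigr),
\end{equation*}
which, after multiplication by $q_j$, reproduces the first summand of $\mathcal{I}_{\text{DICE-E}}^{(1)}$ up to the paper's sign convention on ``influence.''

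Second, for each indirect contribution indexed by $k \in \mathcal{N}_{\text{out}}^{(1)}(j)$, the key algebraic step is to compute the one-hop parameter deviation induced by the single sample $\boldsymbol{z}_j^t$. Using the explicit expressions for $\boldsymbol{\theta}_k^{t+1}$ and $\boldsymbol{\theta}_{k \setminus \boldsymbol{z}_j^t}^{t+1}$ supplied in \cref{df:1_order_influence}, every summand of the gossip average cancels except the one carrying the $j$-th in-neighbor, leaving
\begin{equation*}
\boldsymbol{\theta}_k^{t+1} - \boldsymbol{\theta}_{k \setminus \boldsymbol{z}_j^t}^{t+1} = \boldsymbol{W}_{k,j}^t\bigl(\boldsymbol{\theta}_j^{t+\frac{1}{2}} - \boldsymbol{\theta}_j^t\bigr) = \boldsymbol{W}_{k,j}^t\,\Delta_j(\boldsymbol{\theta}_j^t,\boldsymbol{z}_j^t).
\end{equation*}
Taylor-expanding $L(\cdot;\boldsymbol{z}^{\prime})$ around $\boldsymbol{\theta}_k^{t+1}$ to first order then gives the linearization $L(\boldsymbol{\theta}_k^{t+1};\boldsymbol{z}^{\prime}) - L(\boldsymbol{\theta}_{k \setminus \boldsymbol{z}_j^t}^{t+1};\boldsymbol{z}^{\prime}) = \boldsymbol{W}_{k,j}^t\,\nabla L(\boldsymbol{\theta}_k^{t+1};\boldsymbol{z}^{\prime})^{\top}\Delta_j + O(\|\Delta_j\|^{2})$; multiplying by $q_k$ and summing over $k$ yields the second summand.

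I expect the main obstacle to be bookkeeping rather than analysis: one must carry a consistent sign convention for $\Delta_j = \mathcal{O}_j - \boldsymbol{\theta}_j^t$ and for the definition of ``influence'' through both the direct and indirect pieces so that they combine without double-negation. The quadratic remainders are absorbed by standard smoothness of the loss, in analogy with the single-step error analysis of \textsc{TracIn}; the analogous argument for the multi-hop DICE-GT (beyond this proposition) would require propagating the linearization through further gossip rounds and accumulating Hessian-like factors, consistent with the curvature contribution forecast in \cref{th:dice_e_r}, whereas the one-hop case terminates cleanly after a single step.
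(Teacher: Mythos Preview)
Your proposal is correct and follows essentially the same route as the paper: a first-order Taylor expansion of each loss difference in \cref{df:1_order_influence}, together with the exact cancellation in the gossip average that yields $\boldsymbol{\theta}_k^{t+1}-\boldsymbol{\theta}_{k\setminus\boldsymbol{z}_j^t}^{t+1}=\boldsymbol{W}_{k,j}^t\,\Delta_j(\boldsymbol{\theta}_j^t,\boldsymbol{z}_j^t)$. The paper's proof is in fact slightly less explicit than yours about the $O(\|\Delta_j\|^{2})$ remainder and the smoothness hypothesis; your caveat about the sign convention is also well placed, since the paper silently flips the sign when passing from the Taylor-expanded DICE-GT to the stated DICE-E formula.
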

\end{tcolorbox}

\textbf{Additivity}. The one-hop DICE-E influence measure is additive over training instances. Specifically, for a mini-batch $\mathcal{B}_j^t$ from participant $j$, the total influence is the sum of individual influences:
    \begin{align}\label{eq: additivity}
    \mathcal{I}_{\text{DICE-E}}^{(1)}(\mathcal{B}_j^t, \boldsymbol{z}^{\prime}) = \sum_{\boldsymbol{z}_j^t \in \mathcal{B}_j^t} \mathcal{I}_{\text{DICE-E}}^{(1)}(\boldsymbol{z}_j^t, \boldsymbol{z}^{\prime}).
    \end{align}
The additivity provides guarantees for efficient computation of DICE-E score for large mini-batches.

We can then extend the influence approximation to multi-hop neighbors in decentralized learning and show how the influence of a data instance cascades over the decentralized learning network.




\begin{tcolorbox}[notitle, rounded corners, colframe=middlegrey, colback=lightblue, 
       boxrule=2pt, boxsep=0pt, left=0.15cm, right=0.17cm, enhanced, 
       toprule=2pt,
    ]
\begin{theorem}[Approximation of $r$-hop DICE-GT]\label{th:dice_e_r}
The $r$-hop DICE-GT influence $\mathcal{I}_{\text{\normalfont DICE-GT}}^{(r)}(\boldsymbol{z}_j^t, \boldsymbol{z}^{\prime})$ (see \cref{df:influence_r_order}) can be approximated as follows:
\begin{align*}
\mathcal{I}_{\text{\normalfont DICE-E}}^{(r)}(\boldsymbol{z}_j^t, \boldsymbol{z}^{\prime})
= - &\sum_{\rho=0}^{r} \sum_{ (k_1, \dots, k_{\rho}) \in P_j^{(\rho)} } \eta^{t} q_{k_\rho} 
\underbrace{\colorbox{green!10}{$\displaystyle \left( \prod_{s=1}^{\rho} \boldsymbol{W}_{k_s, k_{s-1}}^{t+s-1} \right)$}}_{{\text{communication graph-related term}}} 
\underbrace{\colorbox{Mulberry!10}{$\displaystyle \vphantom{\prod_{s=1}^{\rho}} \nabla L\bigl(\boldsymbol{\theta}_{k_{\rho}}^{t+\rho}; \boldsymbol{z}^{\prime}\bigr)^\top$}}_{{\text{test gradient}}} \\
& \times
\underbrace{\colorbox{myorange!30}{$\displaystyle \left( \prod_{s=2}^{\rho} \left( \boldsymbol{I} - \eta^{t+s-1} \boldsymbol{H}(\boldsymbol{\theta}_{k_s}^{t+s-1}; \boldsymbol{z}_{k_s}^{t+s-1}) \right) \right)$}}_{{\text{curvature-related term}}}
\underbrace{\colorbox{cyan!10}{$\displaystyle \vphantom{\prod_{s=2}^{\rho}} \Delta_j(\boldsymbol{\theta}_j^t,\boldsymbol{z}_j^t)$}}_{{\text{optimization-related term}}}.
\end{align*}
where $\Delta_j(\boldsymbol{\theta}_j^t,\boldsymbol{z}_j^t) = \mathcal{O}_j(\boldsymbol{\theta}_j^t,\boldsymbol{z}_j^t)-\boldsymbol{\theta}_j^t$, $k_0 = j$. Here $P_j^{(\rho)}$ denotes the set of all sequences $(k_1, \dots, k_{\rho})$ such that $k_s \in \mathcal{N}_{\text{\normalfont out}}^{(1)}(k_{s-1})$ for $s=1,\dots,\rho$ (see \cref{df:r-hop-sequence}) and $\boldsymbol{H}(\boldsymbol{\theta}_{k_s}^{t+s}; \boldsymbol{z}_{k_s}^{t+s})$ is the Hessian matrix of $L$ with respect to $\boldsymbol{\theta}$ evaluated at $\boldsymbol{\theta}_{k_s}^{t+s}$ and data $\boldsymbol{z}_{k_s}^{t+s}$. 
For the cases when \(\rho = 0\) and \(\rho = 1\), the relevant product expressions are defined as identity matrices, thereby ensuring that the r-hop DICE-E remains well-defined.
Full proof is deferred to \cref{sec:proof:th3}.
\end{theorem}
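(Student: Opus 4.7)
The plan is to reduce the multi-hop ground-truth influence to a sum of first-order parameter perturbations and then to unroll those perturbations through one step of gossip plus one step of linearised local optimisation per hop. First, for each $s \in \{0, 1, \ldots, r\}$ and each $k \in \mathcal{N}_{\text{out}}^{(s)}(j)$, I would apply a first-order Taylor expansion to the corresponding summand of \cref{df:influence_r_order}:
\[
L(\boldsymbol{\theta}_k^{t+s}; \boldsymbol{z}') - L(\boldsymbol{\theta}_{k \setminus \boldsymbol{z}_j^t}^{t+s}; \boldsymbol{z}') \;\approx\; \nabla L(\boldsymbol{\theta}_k^{t+s}; \boldsymbol{z}')^{\top}\, \delta_k^{t+s},
\]
where $\delta_k^{t+s} \coloneqq \boldsymbol{\theta}_k^{t+s} - \boldsymbol{\theta}_{k \setminus \boldsymbol{z}_j^t}^{t+s}$ is the parameter perturbation at node $k$ induced by retaining $\boldsymbol{z}_j^t$. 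This matches the test-gradient factor in the theorem and reduces the problem to obtaining a closed form for each $\delta_k^{t+s}$ in terms of $\Delta_j(\boldsymbol{\theta}_j^t,\boldsymbol{z}_j^t)$.

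Next, I would derive a two-stage recursion for $\delta_k^{t+s}$ mirroring \cref{rk: algorithm}: the gossip step maps each incoming perturbation through its mixing weight,
\[
\delta_k^{t+s} \;=\; \sum_{l \in \mathcal{N}_{\text{in}}(k)} \boldsymbol{W}_{k,l}^{t+s-1}\, \delta_l^{t+s-\frac{1}{2}},
\]
while one step of the local optimiser, linearised about the unperturbed iterate, contributes the curvature factor
\[
\delta_l^{t+s-\frac{1}{2}} \;\approx\; \bigl(\boldsymbol{I} - \eta^{t+s-1}\,\boldsymbol{H}(\boldsymbol{\theta}_l^{t+s-1}; \boldsymbol{z}_l^{t+s-1})\bigr)\, \delta_l^{t+s-1}
\]
for a stochastic gradient step, as in \cref{th:dice_e_1} (a general $\mathcal{O}_l$ produces the analogous first-order Jacobian). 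The seed is $\delta_j^{t+\frac{1}{2}} = \Delta_j(\boldsymbol{\theta}_j^t,\boldsymbol{z}_j^t)$ and $\delta_l^{t+\frac{1}{2}} = 0$ for $l \neq j$, which is precisely the base case already handled by \cref{th:dice_e_1}. Unrolling this recursion for $s = 1, 2, \ldots, r$ expands $\delta_{k_\rho}^{t+\rho}$ into a sum over length-$\rho$ directed paths from $j$; each path $(k_1, \ldots, k_\rho) \in P_j^{(\rho)}$ contributes the product of $\rho$ mixing weights together with the nested product of $(\boldsymbol{I} - \eta \boldsymbol{H})$ factors picked up at each intermediate local update. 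Substituting back, weighting each node-$k$ term by $q_k$, and collecting the $\rho = 0$ self-contribution with the $\rho \geq 1$ path-indexed contributions recovers the displayed formula, with the empty products for $\rho = 0, 1$ giving the boundary conventions stated in the theorem.

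The main obstacle is bookkeeping rather than a single hard estimate. I would organise the argument as an induction on $r$ with \cref{th:dice_e_1} as the base case, and in the inductive step verify three consistency points: (i) the loss Taylor remainder is $O(\|\delta_k^{t+s}\|^2)$ and the optimiser linearisation remainder is $O(\|\delta_l^{t+s-1}\|^2)$, so both are second-order in $\eta$ and can be absorbed into the same first-order approximation already declared in \cref{th:dice_e_1}; (ii) the aggregation $\sum_{l \in \mathcal{N}_{\text{in}}(k_s)}$ at each hop correctly enumerates all incoming edges and, upon full unrolling, is reindexed as the sum over $P_j^{(\rho)}$ of directed walks from $j$; and (iii) ``future self'' contributions — the effect of $\boldsymbol{z}_j^t$ on $\boldsymbol{\theta}_j^{t+s}$ for $s \geq 2$ — are automatically included because $j$ is treated as its own $0$-hop neighbour, so no separate argument is needed. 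The combinatorial matching between the unrolled perturbation recursion and the path sum over $P_j^{(\rho)}$ is the most technically delicate piece, but it follows directly from distributing the incoming-neighbour sums across the $\rho$ hops.
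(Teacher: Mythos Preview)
Your proposal is correct and follows essentially the same approach as the paper: Taylor-expand each loss difference in \cref{df:influence_r_order} to reduce to parameter perturbations $\delta_k^{t+s}$, then unroll those perturbations through alternating gossip (picking up $\boldsymbol{W}$ weights) and linearised local updates (picking up $(\boldsymbol{I}-\eta\boldsymbol{H})$ factors), with $\Delta_j$ as the seed. Your framing via an explicit recursion $\delta_k^{t+s} = \sum_{l\in\mathcal{N}_{\text{in}}(k)} \boldsymbol{W}_{k,l}^{t+s-1}\,\delta_l^{t+s-\frac12}$ and induction on $r$ is slightly more careful than the paper's direct path-by-path unrolling (which tacitly assumes a single affected predecessor at each hop and relies on the outer path sum to make this rigorous), but the content is identical.
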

\end{tcolorbox}
Multi-hop DICE-E characterizes the cascading effects of data influence through multiple ``layers" of communication. In this context, the influence of a data instance from participant $j$ can propagate through a sequence of intermediate nodes, reaching participants that are $\rho$ hops away. 


Influence Dynamics: Exponential Decay and Topological Dependency. \cref{th:dice_e_r} demonstrates that the multi-hop influence of a data instance $\boldsymbol{z}j^t$ is governed by the product of communication weights $\prod{s=1}^\rho \boldsymbol{W}{k_s, k{s-1}}^{t+s-1}$ and Hessian-related terms $\prod_{s=2}^{\rho} (\boldsymbol{I} - \eta^{t+s-1} \boldsymbol{H}{k_s}^{t+s-1})$. This indicates that data influence in decentralized learning depends on the curvature of intermediate nodes and decays exponentially with each additional hop. Nodes with higher topological importance (e.g., node $j$ with large $\sum{j=1}^n \boldsymbol{W}_{j,k}$) propagate their data influence more widely and with greater impact on global utility (see \cref{fig:influence_cascade}).
These characteristics underscore the interplay between the original data, the loss landscape curvature, and communication topology in shaping data influence.

\subsection{Practical Applications}
In idealized scenarios, participants may seek to estimate the influence of their high-order neighbors on their local utility improvement. In practice, one-hop DICE-E emerges as a more suitable choice due to its computational efficiency.
Based on \cref{th:dice_e_1}, we derive the peer-level contribution, which we refer to as the \textit{proximal influence}.
\begin{definition}[Proximal Influence]\label{de:proximal_inf}
The proximal influence of a data instance $\boldsymbol{z}_j^t$ from participant $j$ on participant $k$ at iteration $t$ is defined as follows:
\begin{align}
\mathcal{I}_{\text{DICE-E}}^{k,j}(\boldsymbol{z}_j^t, \boldsymbol{z}^{\prime}) = -\eta^t  \boldsymbol{W}_{k,j}^t q_k \nabla L(\boldsymbol{\theta}_j^t; \boldsymbol{z}_j^t)^\top \nabla L(\boldsymbol{\theta}_k^{t+1}; \boldsymbol{z}^{\prime}).
\end{align}
\end{definition}
This term quantifies the influence of the data instance $\boldsymbol{z}_j^t$ from participant $j$ on the loss reduction experienced by its immediate neighbor $k$. Importantly, under the information sharing protocol defined in \cref{rk: algorithm}, participant $k$ has access to $q_k$, $\boldsymbol{W}_{k,j}^t$, $\nabla L(\boldsymbol{\theta}_j^t; \boldsymbol{z}_j^t)$, and $\nabla L(\boldsymbol{\theta}_k^{t+1}; \boldsymbol{z}^{\prime})$.\footnote{In decentralized learning literature, it is common for each participant to share only local parameters with its neighboring participants \citep{NIPS2017_f7552665, pmlr-v119-koloskova20a}. We note that sharing local gradients with neighbors maintains the decentralized learning paradigm and does not significantly compromise privacy, as a participant can reconstruct the gradients of its neighbors using the shared parameters \citep{pmlr-v235-mrini24a}.} Therefore, each participant can compute the proximal contributions of its neighbors. The proximal influence can be utilized in the following scenarios:

\textbf{Collaborator Selection}.
In decentralized learning, local data remains private and only local parameter communication is permitted. The absence of a central authority complicates the problem of selecting the most suitable neighbors with high-quality data. Fortunately, DICE offers a mechanism for participants to efficiently estimate the contributions of their neighbors with proximal influence. By assessing the proximal influence of their neighbors, participants can identify the potential collaborators that have the most significant positive impact on their learning process.

To ensure reciprocal collaboration \citep{d7019d62-38e8-3496-94e6-216994c35897, 10.1257/jep.14.3.159, pmlr-v202-sundararajan23a}, participants can compute \textit{reciprocity factors}, which evaluate the mutual balance of influence.

\begin{definition}[Reciprocity Factors]
The \textit{reciprocity factor} is defined in two forms:

1. \textbf{Proximal Reciprocity Factor:} The reciprocity factor between participants \( j \) and \( k \) at iteration \( t \) is 
\begin{align}
R_{k,j}^t = \frac{q_k \boldsymbol{W}_{k,j}^t \nabla L(\boldsymbol{\theta}_k^{t+1}; \boldsymbol{z}^{\prime})^\top \nabla L(\boldsymbol{\theta}_j^t; \boldsymbol{z}_k^t)}{q_j \boldsymbol{W}_{j,k}^t \nabla L(\boldsymbol{\theta}_j^{t+1}; \boldsymbol{z}^{\prime})^\top \nabla L(\boldsymbol{\theta}_k^t; \boldsymbol{z}_j^t)}.
\end{align}

2. \textbf{Neighborhood Reciprocity Factor:} To evaluate reciprocity at the community level, the neighborhood reciprocity factor for participant \( j \) at iteration \( t \) is defined as:
\begin{align}
R_{j}^t = \frac{\sum_{k \in \mathcal{N}_{\text{out}}^{(1)}(j)} q_k \boldsymbol{W}_{k,j}^t \nabla L(\boldsymbol{\theta}_k^{t+1}; \boldsymbol{z}^{\prime})^\top \nabla L(\boldsymbol{\theta}_j^t; \boldsymbol{z}_j^t)}{\sum_{l \in \mathcal{N}_{\text{in}}^{(1)}(j)} q_l \boldsymbol{W}_{j,l}^t \nabla L(\boldsymbol{\theta}_j^{t+1}; \boldsymbol{z}^{\prime})^\top \nabla L(\boldsymbol{\theta}_l^t; \boldsymbol{z}_j^t)}.
\end{align}
\end{definition}

The proximal reciprocity factor measures the balance of influence between two participants, with values near unity indicating equitable mutual contributions. Significant deviations suggest an imbalance, helping participants refine their collaboration strategies. 
The neighborhood reciprocity factor extends this concept to a participant's local community, evaluating the balance between influence inflow and outflow. 
These metrics support participants in adjusting their engagement and aids the community in managing membership, such as admitting new members or excluding underperforming participants.

\section{Experiments}
\label{experiments}
This section presents the experimental results, with implementation details outlined in \cref{sec: implem_details}.

\paragraph{Influence Alignment}
We evaluate the alignment between one-hop DICE-GT (see \cref{df:1_order_influence}) and its first-order approximation, one-hop DICE-E (see \cref{th:dice_e_1}). One-hop DICE-E $\scriptstyle{\mathcal{I}_{\text{DICE-E}}^{(1)}(\mathcal{B}_j^t, \boldsymbol{z}^{\prime})}$ is computed as the sum of one-sample DICE-E within the mini-batch $\mathcal{B}_j^t$ thanks to the additivity (see \cref{eq: additivity}). DICE-GT $\scriptstyle{\mathcal{I}{\text{DICE-GT}}^{(1)}(\mathcal{B}_j^t, \boldsymbol{z}^{\prime})}$ is calculated by measuring the loss reduction after removing $\mathcal{B}_j^t$ from node $j$ at the $t$-th iteration.
As shown in \cref{fig: alignment-tinyimagenet-32-128-0.1}, each plot contains 30 points, with each point representing the result of a single comparison of the ground-truth and estimated influence. 
We can observe from \cref{fig: alignment-tinyimagenet-32-128-0.1} that DICE-E closely tracks DICE-GT under different settings.
The alignment becomes even stronger on simpler data set including CIFAR-10 and CIFAR-100, as detailed in \cref{sec: alignment}.
These results demonstrate that DICE-E provides a strong approximation of DICE-GT, achieving consistent alignment across datasets (CIFAR-10, CIFAR-100 and Tiny ImageNet) and model architectures (CNN and MLP).
Further validation of this alignment is provided in \cref{sec: alignment} to corroborate the robustness of one-hop DICE-E under changing batch sizes, learning rates, and training epochs.

\begin{figure*}[t!]
\centering
\includegraphics[width=1\linewidth]{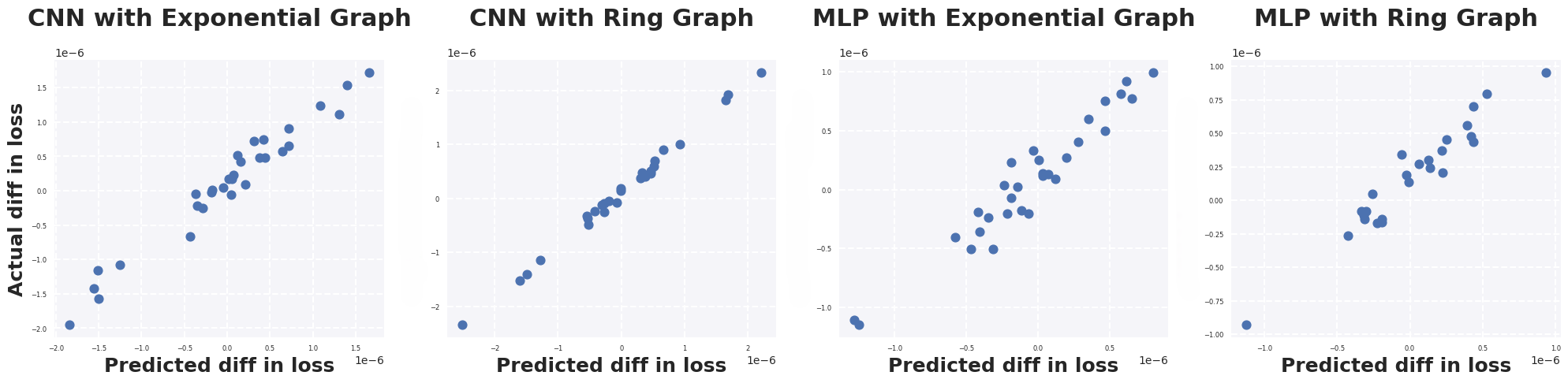}  
\caption{Alignment between one-hop DICE-GT (vertical axis) and DICE-E (horizontal axis) on a 32-node ring graph. Each node uses a 512-sample subset of \underline{Tiny ImageNet}. Models are trained for 5 epochs with a batch size of 128 and a learning rate of 0.1.}
\label{fig: alignment-tinyimagenet-32-128-0.1}
\end{figure*}

\paragraph{Anomaly Detection}
DICE identifies malicious neighbors, referred to as anomalies, by evaluating their proximal influence, which estimates the reduction in test loss caused by a single neighbor. A high proximal influence score indicates that a neighbor increases the test loss, negatively impacting the learning process. In our setup, anomalies are generated through random label flipping or by adding random Gaussian noise to features, please kindly refer to \citep{Zhang2024}. \cref{fig: alignment-tinyimagenet-32-128-0.1} illustrates that the most anomalies (in red) are readily detectable with proximal influence values. Additional results in \cref{sec:add_anomaly} further validate the reliability of this approach.

\begin{figure*}[ht!]
\centering
\includegraphics[width=1\linewidth]{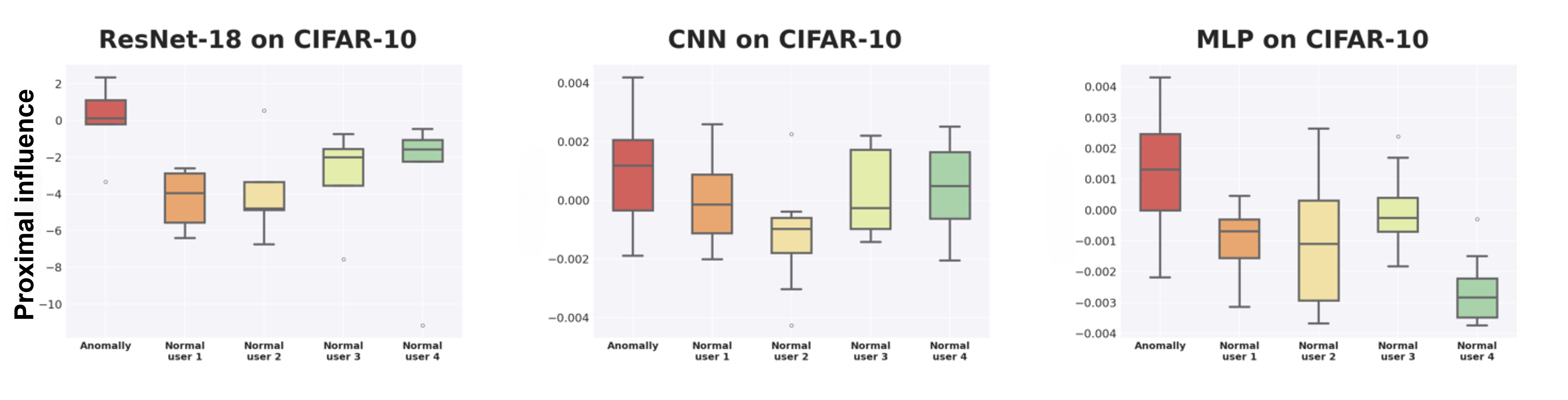}  
\caption{Anomaly detection on exponential graph with 32 nodes. 
Each node uses a 512-sample subset of \underline{CIFAR-10}. Models are trained for 5 epochs with a batch size of 128 and a learning rate of 0.1.
In a 32-node exponential graph, each participant connects with 5 neighbors, where the neighbor in red is set as an anomaly by random \underline{label flipping}, while the other four are normal participants.}
\label{fig: anomaly-cifar10-exp-32}
\end{figure*}

\paragraph{Influence Cascade}\label{sec: infcascade}
The topological dependency of DICE-E in our theory reveals the ``power asymmetries'' \citep{Blau1964ExchangeAP, magee20088} in decentralized learning. To support the theoretical finding, we examine the one-hop DICE-E values of the same batch on participants with vastly different topological importance. 
\cref{fig:influence_cascade} illustrates the one-hop DICE-E influence scores of an \textbf{identical data batch} across participants during decentralized training of a ResNet-18 model on the CIFAR-10 dataset. Node sizes represent the one-hop DICE-E influence scores, quantifying how a single batch impacts other participants in the network. The dominant nodes (e.g., those with larger outgoing communication weights in \( \boldsymbol{W} \)) exhibit significantly higher influence, as shown   in \cref{sec: infcascade_appendix}. 
These visualizations underscore the critical role of topological properties in shaping data influence in decentralized learning, demonstrating how the structure of the communication matrix \( \boldsymbol{W} \) determines the asymmetries in influence.

\section{Conclusion and Future Work}
\label{conclusion}

In this paper, we introduce DICE, the first comprehensive framework for quantifying data influence in fully decentralized learning environments. By modeling influence propagation across multiple hops, DICE reveals how local data contributions extend beyond immediate neighbors to reach non-adjacent neighbors. Mathematically, DICE formalizes how data influence cascades through the communication network, uncovering for the first time the intricate interplay between original data, communication topology, and the curvature of the optimization landscape in shaping data influence.

\textbf{Future Work}. Beyond its theoretical contributions, DICE holds significant potential for practical applications. By identifying influential contributors, DICE provides a foundation for designing fair incentive schemes that ensure equitable attribution of contributions, thereby promoting broader participation in decentralized learning. Moreover, DICE could contribute to the development of decentralized markets, including data markets \citep{huang2023train}, parameter markets \citep{fallah2024three}, and compute markets \citep{kristensen2024commodification}, which serve as fundamental building blocks of a broader decentralized learning ecosystem.

\section*{Acknowledgment}
This work is supported by the National Natural Science Foundation of China (No. 62476244), ZJU-China Unicom Digital Security Joint Laboratory and the advanced computing resources provided by the Supercomputing Center of Hangzhou City University.

\bibliography{1-Paper}
\bibliographystyle{iclr2025_conference}

\appendix
\numberwithin{equation}{section}
\numberwithin{theorem}{section}
\numberwithin{remark}{section}
\numberwithin{definition}{section}
\numberwithin{assumption}{section}
\numberwithin{figure}{section}
\numberwithin{table}{section}
\renewcommand{\thesection}{{\Alph{section}}}
\renewcommand{\thesubsection}{\Alph{section}.\arabic{subsection}}
\renewcommand{\thesubsubsection}{\Alph{section}.\arabic{subsection}.\arabic{subsubsection}}

\newpage
\section{Background}

\subsection{Data Influence}\label{bg:influence}

\textbf{Data Influence Estimation}.
As high-quality data becomes increasingly critical in modern machine learning \citep{NEURIPS2022_c1e2faff, penedo2023the, li2023textbooks, longpre2024consent, pmlr-v235-villalobos24a}, understanding the influence of data has emerged as a crucial research direction \citep{NEURIPS2022_7b75da9b, grosse2023studying, pmlr-v235-xia24c}.
Data influence estimation quantifies the contribution of training data to model predictions \citep{chen2024unravelingtheimpact, ilyas2024data}. It enables fair attribution of data source contributions, serving as the foundation for incentive mechanisms. Besides serving as an incentive, data influence has been extensively applied in various machine learning domains, including few-shot learning \citep{Park_2021_ICCV}, dataset pruning \citep{NEURIPS2022_7b75da9b, yang2023dataset}, distillation \citep{pmlr-v202-loo23a}, fairness improving \citep{pmlr-v162-li22p}, machine unlearning \citep{pmlr-v119-guo20c, NEURIPS2021_9627c45d}, explainability \citep{pmlr-v70-koh17a, han-etal-2020-explaining, grosse2023studying}, as well as training-set attacks \citep{236234, 10.1145/3460120.3485368} and defenses \citep{10.1145/3548606.3559335}.

Data influence estimators are broadly categorized into static and dynamic approaches\footnote{Typically, data influence estimators are classified as retraining-based and gradient-based methods \citep{Hammoudeh2024}. For enhanced logical coherence in this paper, we group retraining-based methods and one-point gradient-based techniques under the static category. This classification does not contradict conventional categorizations.}.
Specifically, static approaches include both retraining-based and one-point methods. Retraining-based methods, such as leave-one-out \citep{660e20f5-c8ad-3626-b8b0-51014625ed30}, Shapley value \citep{shapley1953}, and Datamodels \citep{pmlr-v162-ilyas22a}, assess data influence by retraining the model on (subsets of) the training data. These methods offer a conceptually straightforward computation of data influence and are grounded in theoretical foundations, but they are often computationally expensive due to the requirement for retraining.

In contrast, one-point influence methods approximate the effect of retraining using a single trained model. A well-established one-point method is the canonical influence function \citep{pmlr-v70-koh17a}, developed from statistics \citep{doi:10.1080/01621459.1974.10482962, Chatterjee1982ResidualsAI}, which examines how infinitesimal perturbations of a training example affect the empirical risk minimizers (ERM) \citep{vapnik1974theory}. The influence function has been extended to incorporate higher-order information \citep{pmlr-v119-basu20b} and scaled for larger models \citep{guo-etal-2021-fastif, Schioppa_2022}, including LLMs \citep{grosse2023studying}. \citet{fu2022knowledge} extend influence function to Bayesian inference. While these static influence measures have elegant theoretical foundations, they are limited in characterizing how training data influences the training process.

Alternatively, dynamic methods enhance influence estimation by considering the evolution of model parameters across training iterations \citep{NEURIPS2019_c61f571d}. Notable examples in this category include TracIn \citep{NEURIPS2020_e6385d39} and In-Run Data Shapley \citep{wang2024data}, which track the influence of training data points by averaging gradient similarities over time. The practicality of dynamic influence estimators is demonstrated by their applications in improving training processes in modern setups \citep{pmlr-v235-xia24c}.
Recently, \citet{NEURIPS2023_550ab405} adopt a novel memory-perturbation equation framework to derive dynamic influence estimation of model trained under different centralized optimization algorithms, including SGD, RMSprop and Adam.

However, the existing static and dynamic influence estimation methods primarily target centralized scenarios, and little progress has been made in analyzing data influence in fully decentralized environments. To the best of our knowledge, the only closely related work is by \citet{terashita2022decentralized}, who proposed a decentralized hyper-gradient method and provided novel insights on applying hyper-gradients to compute a centralized formulation of data influence. Nevertheless, their estimation method is static and cannot capture the influence cascade arising from gossip communication during decentralized training. 
In contrast, our framework, DICE, is specifically designed for fully decentralized environments, allowing it to provide a fine-grained characterization of the unique influence cascade inherent in these settings.

\subsection{Decentralized learning}\label{bg:decentralized}
 
Currently, large-scale training and inference processes are primarily performed in expensive data centers. 
Decentralized training, echoing swarm intelligence \citep{bonabeau1999swarm,  MAVROVOUNIOTIS20171}, offers a cost-efficient alternative avenue by crowd-sourcing computational workload to geographically 
decentralized compute nodes, without the control of central servers \citep{yuan2022decentralized, NEURIPS2023_28bf1419,jaghouar2024intellect}.
One notable example showcasing decentralized computing's computational potential is the Bitcoin eco-system which virtually distributes jobs requiring instantaneous 16 GW power consumption \citep{CBECI}-- this has been triple of the estimated 5 GW of the world’s largest planned cluster for AI \citep{microsoft_openai_stargate, openai_stargate}.

In the following, we provide an overview of the algorithmic and theoretical advancements in decentralized learning. While our discussion touches on several notable contributions, it is far from exhaustive. For a more comprehensive survey, we refer readers to \citet{10251949, singhaperspective, 10542323}.

\textbf{Algorithmic Development of Decentralized Learning}.
The advancement of decentralized learning algorithms has been driven by the need for communication-efficient optimization methods in practical distributed learning scenarios. These algorithms have adapted to accommodate dynamic network structures \citep{nedic2014distributed,pmlr-v119-koloskova20a,ying2021exponential,takezawa2023beyond},  asynchronous communication \citep{lian2018asynchronous,xu2021dp,nadiradze2021asynchronous,bornstein2023swift,pmlr-v238-even24a}, data heterogeneity \citep{tang2018d,vogels2021relaysum,pmlr-v206-le-bars23a}, and Byzantine adversaries \citep{he2022byzantine, ye2025generalization}. Furthermore, their applicability has extended beyond conventional optimization problem to more complex problem formulations, including compositional \citep{gao2021fast}, minimax \citep{xian2021faster,zhu2023stability,chen2024}, and bi-level \citep{yang2022decentralized,pmlr-v206-gao23a,pmlr-v202-chen23n} optimization problems.
Additionally, privacy concerns in decentralized learning are also critical, with efforts focusing on differentially privacy \citep{pmlr-v235-cyffers24a, pmlr-v235-allouah24b} and data reconstruction attacks \citep{pmlr-v235-mrini24a}.

\textbf{Theoretical Development of Decentralized Learning}.
In terms of optimization, earlier works on decentralized optimization \citep{nedic2009distributed,sayed2014adaptation,yuan2016decentralized,NIPS2017_f7552665} lay the groundwork for understanding convergence.
\citet{pmlr-v139-lu21a} present a systematic framework for federated and decentralized learning by categorizing decentralization into three distinct layers.
 \citet{pmlr-v119-koloskova20a} unify synchronous decentralized gradient descent algorithms across various communication topologies, while \citet{pmlr-v238-even24a} extend this framework to accommodate asynchronous scenarios. Building on these efforts, \citet{zehtabi2025decentralized} further develops existing frameworks to consider the sporadicity of both communications and local computations. 
Regarding generalization, \citet{richards2020graph} establishes generalization bounds of decentralized SGD  in convex settings via uniform stability. \citet{Sun_Li_Wang_2021} extend this to non-convex settings, revealing an additional ${O}(\frac{1}{\rho})$ dependence on graph topology, though later empirical studies suggest this gap might be overstated \citep{pmlr-v139-kong21a}. To refine this, \citet{zhu2022topology} introduce a Gaussian weight difference assumption, improving the $\rho$ dependence to $O((1-\rho)^2)$. \citet{pmlr-v235-le-bars24a} further show that in convex settings, the generalization error of local models in decentralized SGD matches that of standard SGD, while in non-convex settings, decentralization primarily affects worst-case generalization.
To explain previously unexplained phenomena in decentralized learning \citep{pmlr-v139-kong21a, gurbuzbalaban2022heavy, JMLR:v24:22-1471},
\citet{pmlr-v202-zhu23e} later link decentralized SGD to random sharpness-aware minimization, uncovering a flatness bias in decentralized training. Complementing this, \citet{cao2024trade} further analyze the flatness properties of DSGD and its role in escaping local minima.

\textbf{Decentralized Training of Foundation Models}.
DT-FM \citep{yuan2022decentralized} introduces tasklet scheduling for training Transformers in decentralized settings with low-bandwidth networks, optimizing resource utilization in distributed environments. SWARM Parallelism \citep{pmlr-v202-ryabinin23a} enhances scalability through fault-tolerant pipelines and dynamic node rebalancing. 
CocktailSGD \citep{pmlr-v202-wang23t} combines decentralization with sparsification and quantization enhances communication efficiency in fine-tuning LLMs.
On the inference side, Petal \citep{borzunov-etal-2023-petals} leverages swarm parallelism to  amortize inference costs across heterogeneous resources. Recently, 
Intellect \citep{jaghouar2024intellect} built on Diloco \citep{douillard2023diloco} has employed a combination of data parallel and model parallel to collaboratively train large models with up to billions of parameters. For a comprehensive overview of large-scale deep learning training, including data, model architecture, optimization strategies, budget constraints, and system design, see \citet{10.1145/3700439}.

 The following figure presents a comparison between server-based learning and decentralized learning.

\begin{figure*}[h!]
  \centering
\includegraphics[width=1\linewidth]{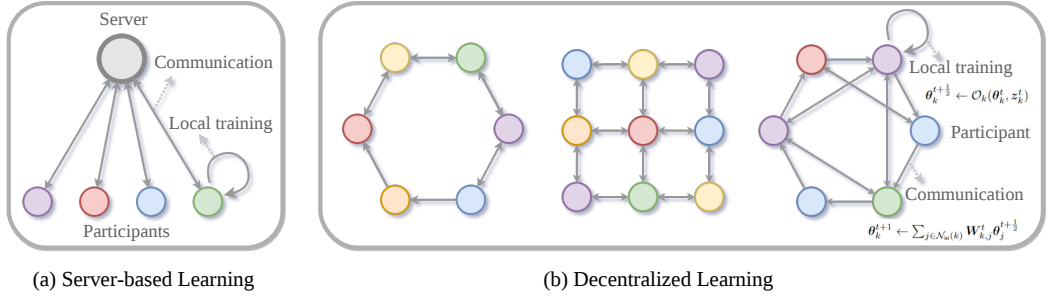}  
  \caption{A comparative illustration of server-based learning versus decentralized learning.}
\label{fig:topology}
\end{figure*}

We summarize some commonly used notions regarding decentralized training as follows:

\begin{definition}[Doubly Stochastic Matrix]\label{def:doubly-matrix}
Let $\mathcal{G} = (\mathcal{V}, \mathcal{E})$ represent a decentralized communication topology, where $\mathcal{V}$ is the set of $n$ nodes and $\mathcal{E}$ is the set of edges. For any $\mathcal{G} = (\mathcal{V}, \mathcal{E})$, the doubly stochastic gossip matrix $\boldsymbol{W} = [\boldsymbol{W}_{j,k}] \in \mathbb{R}^{n \times n}$ is defined on the edge set $\mathcal{E}$ and satisfies:
\begin{itemize}[leftmargin=*]
    \item If $j \neq k$ and $(j, k) \notin \mathcal{E}$, then $\boldsymbol{W}_{j,k} = 0$; otherwise, $\boldsymbol{W}_{j,k} > 0$.
    \item $\boldsymbol{W}_{j,k} \in [0,1]$ for all $j, k$, and $\sum_{k} \boldsymbol{W}_{k,j} = \sum_{j} \boldsymbol{W}_{j,k} = 1$.
\end{itemize}
\end{definition}
Intuitively, the doubly stochastic property ensures a balanced flow of information during gossip communication, a common assumption in decentralized learning literature. However, in the scenarios we consider, participants may occupy different roles within the network. Influential nodes might have higher outgoing weights, i.e., $\sum_{j=1}^n \boldsymbol{W}_{j,k} > 1$.

To accommodate such cases while still ensuring the convergence of decentralized SGD \citep{8491372, Xin2019}, we introduce a relaxed condition:
\begin{definition}[Row Stochastic Matrix]\label{def:row-matrix}
Let $\mathcal{G} = (\mathcal{V}, \mathcal{E})$ denote a decentralized communication topology, where $\mathcal{V}$ is the set of $n$ nodes and $\mathcal{E}$ is the set of edges. For any $\mathcal{G} = (\mathcal{V}, \mathcal{E})$, the row stochastic gossip matrix $\boldsymbol{W} = [\boldsymbol{W}_{j,k}] \in \mathbb{R}^{n \times n}$ is defined on the edge set $\mathcal{E}$ and satisfies:
\begin{itemize}[leftmargin=*]
    \item If $j \neq k$ and $(j, k) \notin \mathcal{E}$, then $\boldsymbol{W}_{j,k} = 0$; otherwise, $\boldsymbol{W}_{j,k} > 0$.
    \item $\boldsymbol{W}_{j,k} \in [0,1]$ for all $j, k$, and $\sum_{j} \boldsymbol{W}_{k,j} = 1$.
\end{itemize}
\end{definition}
The weighted adjacency matrix $\boldsymbol{W}$ in \cref{rk: algorithm} can vary across iterations, resulting in time-varying collaborations among participants. Additionally, FedAVG \citep{pmlr-v54-mcmahan17a} is a special case of \cref{rk: algorithm} where the averaging step is performed globally. This demonstrates that our framework accommodates decentralized learning with dynamic communication topologies and is applicable to both federated and decentralized learning paradigms, even though the primary focus is on fully decentralized learning without central servers.

\subsection{Multi-hop Neighbors}\label{bg:high-order}
In graph theory, the concept of neighborhoods is fundamental for understanding the structure and dynamics of graphs. To ensure a coherent and comprehensive flow in \cref{methodology}, we provide formal definitions of multi-hop neighborhoods.

The adjacency matrix serves as a powerful tool for representing and analyzing the structure of a graph. Multi-hop neighbors can be precisely defined using the adjacency matrix.
\begin{definition}[Adjacency Matrix]
    The adjacency matrix $A$ of a graph $G = (\mathcal{V}, \mathcal{E})$ is an $n \times n$ square matrix (where $n = |\mathcal{V}|$) defined by:
    \begin{align}
    A_{jk} =
    \begin{cases}
        1 & \text{if } (j, k) \in \mathcal{E}, \\
        0 & \text{otherwise}.
    \end{cases}
    \end{align}
\end{definition}

The adjacency matrix enables the determination of $r$-hop neighbors through matrix exponentiation. Specifically, the $(j,k)$-entry of $A^r$, denoted as $(A^r)_{jk}$, corresponds to the number of distinct paths of length $r$ from node $j$ to node $k$.

\begin{definition}[$r$-hop Neighbor via Adjacency Matrix]
    The set of $r$-hop neighbors is formally defined using the adjacency matrix $A$ as:
    \begin{align}
    \mathcal{N}^{(r)}(j) = \left\{ k \in \mathcal{V} \,\bigg|\, (A^r)_{jk} > 0 \text{ and } \forall s < r,\, (A^s)_{jk} = 0 \right\}.
    \end{align}
    This definition indicates that there exists at least one path of length $r$ connecting nodes $j$ and $k$, and no shorter path exists between them.
\end{definition}

Multi-hop neighbors can also be defined via the \textit{shortest path length} between two nodes.

\begin{definition}[Shortest Path Length]
    In a connected graph $G = (\mathcal{V}, \mathcal{E})$, the shortest path length $d(j, k)$ between nodes $j \in \mathcal{V}$ and $k \in \mathcal{V}$ is the minimum number of edges that must be traversed to travel from $j$ to $k$.
\end{definition}

Building upon this, the set of $r$-hop neighbors is defined as follows:

\begin{definition}[$r$-hop Neighbor via Shortest Path Length]
    For any node $j \in \mathcal{V}$ and a positive integer $r \geq 1$, the set of $r$-hop neighbors, denoted by $\mathcal{N}^{(r)}(j)$, consists of all nodes that are at a distance of exactly $r$ from node $j$. Formally,
    \begin{align}
    \mathcal{N}^{(r)}(j) = \{ k \in \mathcal{V} \mid d(j, k) = r \},
    \end{align}
    where $d(j, k)$ represents the shortest path length between nodes $j$ and $k$ in the graph $G$.
\end{definition}

Furthermore, an alternative perspective on $r$-hop neighborhoods involves characterizing them through sequences of nodes, which provides a formal framework aligned with influence propagation in decentralized learning.

\begin{definition}[$r$-hop Neighbor via Node Sequences]\label{df:r-hop-sequence}
    For any node $j \in \mathcal{V}$ and a positive integer $r \geq 1$, let $P_j^{(r)}$ denote the set of all sequences $(k_1, \dots, k_r)$ such that for each $s = 1, \dots, r$, the node $k_s$ is an out-neighbor of $k_{s-1}$, with $k_0 = j$. Formally, 
    \[
    P_j^{(r)} = \left\{ (k_1, \dots, k_r) \mid k_s \in \mathcal{N}_{\text{out}}^{(1)}(k_{s-1}) \text{ for } s = 1, \dots, r \right\}.
    \]
    This definition ensures that each node in the $r$-hop neighborhood is reachable from node $j$ through a sequence of consecutive immediate out-neighbors within $\rho \leq r$ steps.
\end{definition}

This sequence-based characterization of $r$-hop neighborhoods provides a granular understanding of the pathways through which influence or information can propagate within the network, complementing the previous definitions based on adjacency matrices and shortest path lengths.


\section{Discussions}
\subsection{Practical applications of DICE}
\textbf{Decentralized Machine Unlearning}.
    As concerns about data privacy and the right to be forgotten increase, the ability to remove specific data contributions from a trained model becomes important \citep{pmlr-v119-guo20c, NEURIPS2021_9627c45d}. In decentralized settings, retraining the model from scratch is often impractical for edge users with limited compute. The proximal influence measure enables participants to estimate the impact of removing a particular data instance from its neighbor. For example, by assessing the influence of $\boldsymbol{z}_j^t$ on neighbors, participants can adjust their local models to mitigate the effects of $\boldsymbol{z}_j^t$ without requesting full retraining of the whole decentralized learning system. This approach facilitates efficient and targeted unlearning procedures, avoiding costly system-wide retraining while respecting individual data privacy requests.

\subsection{Additional related work}
\textbf{Clustered Federated Learning}. 
Clustered Federated Learning (CFL) addresses the challenge of data heterogeneity by grouping clients with similar data distributions and training separate models for each cluster \citep{mansour2020three, NEURIPS2020_e32cc80b, 9174890, pmlr-v235-kim24p}. Gradient-based CFL methods \citep{9174890, pmlr-v235-kim24p} use client gradient similarities to form clusters, with \citet{9174890} employing cosine similarity to recursively partition clients after convergence and \citet{pmlr-v235-kim24p} dynamically applying spectral clustering to organize clients based on gradient features during training. These methods effectively capture direct, peer-to-peer gradient relationships to cluster clients with similar data-generating distributions. 
Both gradient-based CFL and the one-hop DICE estimator (see \cref{th:dice_e_1}) utilize gradient similarity information. 
However, CFL is inherently limited to local interactions, as its gradient similarity metrics are confined to pairwise relationships. In contrast, DICE extends far beyond this scope by quantifying the propagation of influence across multiple hops in a decentralized network.  Mathematically, \cref{th:dice_e_r} highlights how DICE generalizes peer-level gradient similarity into a non-trivial extension for decentralized networks. This includes incorporating key factors including network topology and curvature information, enabling a deeper understanding of how influence flows through the whole decentralized learning systems. 
A promising future direction is to explore the potential of DICE-E as a more advanced high-order gradient similarity metric for effectively clustering participants in decentralized federated learning.

\section{Proof}
\subsection{Proof of \cref{th:dice_e_1}}\label{sec:proof_th1}

\begin{proposition}[Approximation of One-hop DICE-GT]
The one-hop DICE-GT value (see \cref{df:1_order_influence}) can be linearly approximated as follows:
\begin{align}
\mathcal{I}_{\text{DICE-E}}^{(1)}(\boldsymbol{z}_j^t, \boldsymbol{z}^{\prime})
=&\; -q_j\, \nabla L(\boldsymbol{\theta}_j^t; \boldsymbol{z}^{\prime})^\top  \Delta_j(\boldsymbol{\theta}_j^t,\boldsymbol{z}_j^t)- \sum_{k \in \mathcal{N}_{\text{out}}^{(1)}(j)} q_k\, \boldsymbol{W}_{k,j}^t\, \nabla L(\boldsymbol{\theta}_k^{t+1}; \boldsymbol{z}^{\prime})^\top \Delta_j(\boldsymbol{\theta}_j^t,\boldsymbol{z}_j^t),
\end{align}
where $\Delta_j(\boldsymbol{\theta}_j^t,\boldsymbol{z}_j^t) = \mathcal{O}_j(\boldsymbol{\theta}_j^t,\boldsymbol{z}_j^t)-\boldsymbol{\theta}_j^t$. The proof is given below.
\end{proposition}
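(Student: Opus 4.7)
The plan is a direct first-order Taylor linearization of each of the two loss differences that appear in the one-hop ground-truth definition \cref{df:1_order_influence}. First, I would unfold the local update using \cref{rk: algorithm}: $\boldsymbol{\theta}_j^{t+\frac{1}{2}} = \mathcal{O}_j(\boldsymbol{\theta}_j^t, \boldsymbol{z}_j^t) = \boldsymbol{\theta}_j^t + \Delta_j(\boldsymbol{\theta}_j^t, \boldsymbol{z}_j^t)$, and Taylor-expand $L(\cdot; \boldsymbol{z}^{\prime})$ around $\boldsymbol{\theta}_j^t$ to obtain
\begin{align*}
L(\boldsymbol{\theta}_j^{t+\frac{1}{2}}; \boldsymbol{z}^{\prime}) - L(\boldsymbol{\theta}_j^t; \boldsymbol{z}^{\prime}) \;=\; \nabla L(\boldsymbol{\theta}_j^t; \boldsymbol{z}^{\prime})^\top \Delta_j(\boldsymbol{\theta}_j^t, \boldsymbol{z}_j^t) + O\bigl(\|\Delta_j\|^2\bigr).
\end{align*}
This handles the direct marginal contribution weighted by $q_j$.

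Next, for each one-hop out-neighbor $k \in \mathcal{N}_{\text{out}}^{(1)}(j)$, I would compute the parameter perturbation that including $\boldsymbol{z}_j^t$ induces on $\boldsymbol{\theta}_k^{t+1}$. The gossip rule gives $\boldsymbol{\theta}_k^{t+1} = \sum_{l \in \mathcal{N}_{\text{in}}(k)} \boldsymbol{W}_{k,l}^t \boldsymbol{\theta}_l^{t+\frac{1}{2}}$, and the counterfactual $\boldsymbol{\theta}_{k \setminus \boldsymbol{z}_j^t}^{t+1}$ differs from it only by replacing the single $j$-indexed factor $\boldsymbol{\theta}_j^{t+\frac{1}{2}}$ with $\boldsymbol{\theta}_j^t$. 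All other summands cancel exactly, so the difference telescopes cleanly to $\boldsymbol{\theta}_k^{t+1} - \boldsymbol{\theta}_{k \setminus \boldsymbol{z}_j^t}^{t+1} = \boldsymbol{W}_{k,j}^t\,\Delta_j(\boldsymbol{\theta}_j^t, \boldsymbol{z}_j^t)$. A first-order Taylor expansion of $L(\cdot; \boldsymbol{z}^{\prime})$ at $\boldsymbol{\theta}_k^{t+1}$ then yields
\begin{align*}
L(\boldsymbol{\theta}_k^{t+1}; \boldsymbol{z}^{\prime}) - L(\boldsymbol{\theta}_{k \setminus \boldsymbol{z}_j^t}^{t+1}; \boldsymbol{z}^{\prime}) \;=\; \boldsymbol{W}_{k,j}^t\, \nabla L(\boldsymbol{\theta}_k^{t+1}; \boldsymbol{z}^{\prime})^\top \Delta_j(\boldsymbol{\theta}_j^t, \boldsymbol{z}_j^t) + O\bigl(\|\Delta_j\|^2\bigr),
\end{align*}
which handles each indirect marginal contribution weighted by $q_k$.

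Substituting both linearizations back into \cref{df:1_order_influence} and adopting the influence sign convention under which DICE-E reports loss \emph{reduction} (so that a loss-decreasing update contributes a positive score, accounting for the overall minus sign in the statement) yields the claimed expression for $\mathcal{I}_{\text{DICE-E}}^{(1)}$. The only non-mechanical step I foresee is the counterfactual bookkeeping on the neighbor side: one has to make precise that, because gossip averaging is linear and only node $j$'s contribution is being toggled, the perturbation felt by neighbor $k$ is exactly the scalar $\boldsymbol{W}_{k,j}^t$ times the local update $\Delta_j$, rather than a compounded quantity involving the counterfactual parameters of multiple neighbors. Once that collapse is pinned down, both Taylor expansions are routine and the proposition follows by summation.
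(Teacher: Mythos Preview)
Your proposal is correct and follows essentially the same route as the paper's own proof: Taylor-expand the direct term around $\boldsymbol{\theta}_j^t$, exploit the linearity of the gossip average to collapse the neighbor perturbation to $\boldsymbol{W}_{k,j}^t\,\Delta_j$, Taylor-expand the neighbor term around $\boldsymbol{\theta}_k^{t+1}$, and sum. Your explicit remark about the sign convention is in fact more careful than the paper, whose proof derives both linearizations with a positive sign and then writes the final combined expression with an overall minus sign without comment.
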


\begin{proof}
Recall from \cref{df:1_order_influence} that the one-hop DICE-GT is defined by
\begin{align*}
\mathcal{I}_{\text{DICE-GT}}^{(1)}(\boldsymbol{z}_j^t, \boldsymbol{z}^{\prime})
= &\; q_j\Bigl( L(\boldsymbol{\theta}_j^{t+\frac{1}{2}}; \boldsymbol{z}^{\prime}) - L(\boldsymbol{\theta}_j^t; \boldsymbol{z}^{\prime}) \Bigr) 
+ \sum_{k \in \mathcal{N}_{\text{out}}^{(1)}(j)} q_k\Bigl( L(\boldsymbol{\theta}_k^{t+1}; \boldsymbol{z}^{\prime}) - L(\boldsymbol{\theta}_{k \setminus \boldsymbol{z}_j^t}^{t+1}; \boldsymbol{z}^{\prime}) \Bigr).
\end{align*}

We proceed by applying a first-order Taylor expansion to each term.

\textbf{First term:}  
Using Taylor expansion, we write
\begin{align*}
L(\boldsymbol{\theta}_j^{t+\frac{1}{2}}; \boldsymbol{z}^{\prime}) - L(\boldsymbol{\theta}_j^t; \boldsymbol{z}^{\prime})
&\approx \nabla L(\boldsymbol{\theta}_j^t; \boldsymbol{z}^{\prime})^\top \Bigl( \boldsymbol{\theta}_j^{t+\frac{1}{2}} - \boldsymbol{\theta}_j^t \Bigr).
\end{align*}
Under the new update rule, we have
\[
\boldsymbol{\theta}_j^{t+\frac{1}{2}} = \mathcal{O}_j(\boldsymbol{\theta}_j^t, \boldsymbol{z}_j^t),
\]
so that
\begin{align*}
\boldsymbol{\theta}_j^{t+\frac{1}{2}} - \boldsymbol{\theta}_j^t = \mathcal{O}_j(\boldsymbol{\theta}_j^t,\boldsymbol{z}_j^t)-\boldsymbol{\theta}_j^t.
\end{align*}
Thus, the first term is approximated by
\begin{align*}
L(\boldsymbol{\theta}_j^{t+\frac{1}{2}}; \boldsymbol{z}^{\prime}) - L(\boldsymbol{\theta}_j^t; \boldsymbol{z}^{\prime})
\approx \nabla L(\boldsymbol{\theta}_j^t; \boldsymbol{z}^{\prime})^\top \Bigl(\mathcal{O}_j(\boldsymbol{\theta}_j^t,\boldsymbol{z}_j^t)-\boldsymbol{\theta}_j^t\Bigr).
\end{align*}

\textbf{Second term:}  
For each \(k \in \mathcal{N}_{\text{out}}^{(1)}(j)\), we similarly have
\begin{align*}
L(\boldsymbol{\theta}_k^{t+1}; \boldsymbol{z}^{\prime}) - L(\boldsymbol{\theta}_{k \setminus \boldsymbol{z}_j^t}^{t+1}; \boldsymbol{z}^{\prime})
\approx \nabla L(\boldsymbol{\theta}_k^{t+1}; \boldsymbol{z}^{\prime})^\top \Bigl( \boldsymbol{\theta}_k^{t+1} - \boldsymbol{\theta}_{k \setminus \boldsymbol{z}_j^t}^{t+1} \Bigr).
\end{align*}
By the gossip averaging step in the algorithm, we have
\begin{align*}
\boldsymbol{\theta}_k^{t+1} &= \sum_{l \in \mathcal{N}_{\text{in}}(k)} \boldsymbol{W}_{k,l}^t\, \boldsymbol{\theta}_l^{t+\frac{1}{2}},\\[1mm]
\boldsymbol{\theta}_{k \setminus \boldsymbol{z}_j^t}^{t+1} &= \boldsymbol{W}_{k,j}^t\, \boldsymbol{\theta}_j^t + \sum_{l \in \mathcal{N}_{\text{in}}(k)\setminus \{j\}} \boldsymbol{W}_{k,l}^t\,\boldsymbol{\theta}_l^{t+\frac{1}{2}}.
\end{align*}
It then follows that
\begin{align*}\label{eq:diff_new}
\boldsymbol{\theta}_k^{t+1} - \boldsymbol{\theta}_{k \setminus \boldsymbol{z}_j^t}^{t+1}
= \boldsymbol{W}_{k,j}^t \Bigl( \boldsymbol{\theta}_j^{t+\frac{1}{2}} - \boldsymbol{\theta}_j^t \Bigr)
= \boldsymbol{W}_{k,j}^t \Bigl( \mathcal{O}_j(\boldsymbol{\theta}_j^t,\boldsymbol{z}_j^t)-\boldsymbol{\theta}_j^t \Bigr).
\end{align*}
Thus, the second term becomes
\begin{align*}
L(\boldsymbol{\theta}_k^{t+1}; \boldsymbol{z}^{\prime}) - L(\boldsymbol{\theta}_{k \setminus \boldsymbol{z}_j^t}^{t+1}; \boldsymbol{z}^{\prime})
\approx \boldsymbol{W}_{k,j}^t\, \nabla L(\boldsymbol{\theta}_k^{t+1}; \boldsymbol{z}^{\prime})^\top \Bigl( \mathcal{O}_j(\boldsymbol{\theta}_j^t,\boldsymbol{z}_j^t)-\boldsymbol{\theta}_j^t \Bigr).
\end{align*}

\textbf{Combining the Approximations:}  
Substituting the above approximations into the definition of \(\mathcal{I}_{\text{DICE-GT}}^{(1)}\) yields
\begin{align*}
\mathcal{I}_{\text{DICE-E}}^{(1)}(\boldsymbol{z}_j^t, \boldsymbol{z}^{\prime})
\approx &\; -q_j\, \nabla L(\boldsymbol{\theta}_j^t; \boldsymbol{z}^{\prime})^\top \Bigl( \mathcal{O}_j(\boldsymbol{\theta}_j^t,\boldsymbol{z}_j^t)-\boldsymbol{\theta}_j^t \Bigr) \nonumber\\[1mm]
&\; - \sum_{k \in \mathcal{N}_{\text{out}}^{(1)}(j)} q_k\, \boldsymbol{W}_{k,j}^t\, \nabla L(\boldsymbol{\theta}_k^{t+1}; \boldsymbol{z}^{\prime})^\top \Bigl( \mathcal{O}_j(\boldsymbol{\theta}_j^t,\boldsymbol{z}_j^t)-\boldsymbol{\theta}_j^t \Bigr).
\end{align*}
This completes the proof.
\end{proof}

\subsection{Proof of Two-hop DICE-E Approximation}\label{sec:proof:th2}

\begin{proposition}[Approximation of Two-hop DICE-GT]
The two-hop DICE-GT influence 
\(
\mathcal{I}_{\text{\normalfont DICE-E}}^{(2)}(\boldsymbol{z}_j^t, \boldsymbol{z}^{\prime})
\)
(see \cref{df:influence_r_order}) can be approximated as
\begin{align}
&\mathcal{I}_{\text{\normalfont DICE-E}}^{(2)}(\boldsymbol{z}_j^t, \boldsymbol{z}^{\prime})= \mathcal{I}_{\text{\normalfont DICE-E}}^{(1)}(\boldsymbol{z}_j^t, \boldsymbol{z}^{\prime})\nonumber\\
& - \sum_{k \in \mathcal{N}_{\text{out}}^{(1)}(j)} \sum_{l \in \mathcal{N}_{\text{out}}^{(1)}(k)} \eta^{t}q_l \boldsymbol{W}_{l,k}^{t+1} \boldsymbol{W}_{k,j}^{t}  \nabla L(\boldsymbol{\theta}_{l}^{t+2}; \boldsymbol{z}^{\prime})^\top (\boldsymbol{I} - \eta^{t+1} \boldsymbol{H}(\boldsymbol{\theta}_{k}^{t+1}; \boldsymbol{z}_{k}^{t+1})) \Delta_j(\boldsymbol{\theta}_{j}^{t}; \boldsymbol{z}_j^t),
\end{align}
where $\Delta_j(\boldsymbol{\theta}_j^t,\boldsymbol{z}_j^t) \triangleq \mathcal{O}_j(\boldsymbol{\theta}_j^t,\boldsymbol{z}_j^t)-\boldsymbol{\theta}_j^t.$ and$\boldsymbol{H}(\boldsymbol{\theta}_{k}^{t+1}; \boldsymbol{z}_{k}^{t+1})$ denotes the Hessian matrix of $L$ with respect to $\boldsymbol{\theta}_{k}^{t+1}$ evaluated at $\boldsymbol{z}_{k}^{t+1}$.
\end{proposition}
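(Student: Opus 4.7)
The plan is to build directly on \cref{th:dice_e_1} and isolate only the genuinely new contribution coming from the second hop, then propagate a leave-one-out perturbation through one additional local update followed by one additional gossip averaging step.

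First, I would use \cref{df:influence_r_order} to decompose
\[
\mathcal{I}_{\text{DICE-GT}}^{(2)}(\boldsymbol{z}_j^t,\boldsymbol{z}^{\prime})
= \mathcal{I}_{\text{DICE-GT}}^{(1)}(\boldsymbol{z}_j^t,\boldsymbol{z}^{\prime})
+ \sum_{l \in \mathcal{N}_{\text{out}}^{(2)}(j)} q_l\bigl(L(\boldsymbol{\theta}^{t+2}_l;\boldsymbol{z}^{\prime}) - L(\boldsymbol{\theta}^{t+2}_{l \setminus \boldsymbol{z}_j^t};\boldsymbol{z}^{\prime})\bigr),
\]
so that the one-hop piece is immediately replaced by $\mathcal{I}_{\text{DICE-E}}^{(1)}(\boldsymbol{z}_j^t,\boldsymbol{z}^{\prime})$ via \cref{th:dice_e_1}. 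The remaining task is to linearize each $l$-th loss difference around $\boldsymbol{\theta}^{t+2}_l$, which produces $\nabla L(\boldsymbol{\theta}^{t+2}_l;\boldsymbol{z}^{\prime})^\top(\boldsymbol{\theta}^{t+2}_l-\boldsymbol{\theta}^{t+2}_{l\setminus \boldsymbol{z}_j^t})$ and reduces everything to computing the parameter perturbation at the 2-hop node.

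The core step is unrolling one additional round of \cref{rk: algorithm}. Writing the gossip averaging at $l$ gives $\boldsymbol{\theta}^{t+2}_l - \boldsymbol{\theta}^{t+2}_{l\setminus \boldsymbol{z}_j^t} = \sum_k \boldsymbol{W}_{l,k}^{t+1}(\boldsymbol{\theta}_k^{t+\frac{3}{2}} - \boldsymbol{\theta}_{k\setminus \boldsymbol{z}_j^t}^{t+\frac{3}{2}})$, where only intermediate nodes $k \in \mathcal{N}_{\text{out}}^{(1)}(j) \cap \mathcal{N}_{\text{in}}(l)$ contribute. For each such $k$, the local SGD-style update $\boldsymbol{\theta}_k^{t+\frac{3}{2}} = \boldsymbol{\theta}_k^{t+1} - \eta^{t+1}\nabla L(\boldsymbol{\theta}_k^{t+1};\boldsymbol{z}_k^{t+1})$ is subtracted against its counterfactual and the gradient difference is Taylor-expanded around $\boldsymbol{\theta}_k^{t+1}$. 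This contributes the curvature propagator $\boldsymbol{I} - \eta^{t+1}\boldsymbol{H}(\boldsymbol{\theta}_k^{t+1};\boldsymbol{z}_k^{t+1})$ acting on the one-hop perturbation $\boldsymbol{\theta}_k^{t+1} - \boldsymbol{\theta}_{k\setminus \boldsymbol{z}_j^t}^{t+1} = \boldsymbol{W}_{k,j}^t \Delta_j(\boldsymbol{\theta}_j^t,\boldsymbol{z}_j^t)$, which was already established inside the proof of \cref{th:dice_e_1}. Chaining these two linear operators and re-indexing the outer sum in path form (over $k \in \mathcal{N}_{\text{out}}^{(1)}(j)$ and $l \in \mathcal{N}_{\text{out}}^{(1)}(k)$) then reproduces the expression in the statement.

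The main obstacle is the bookkeeping of the counterfactual trajectory: a 2-hop neighbor $l$ may be reachable through several intermediate $k$, and it is only the component entering via the gossip weight $\boldsymbol{W}_{k,j}^t$ at step $t$ that should be peeled off, while every other contribution to $\boldsymbol{\theta}_k^{t+1}$ must be preserved under the leave-one-out operation. A secondary subtlety is ensuring consistency of the linearization anchor points: the gradient difference at $k$ is expanded using the Hessian at $\boldsymbol{\theta}_k^{t+1}$, while the loss difference at $l$ is expanded using the gradient at $\boldsymbol{\theta}_l^{t+2}$, and I need to argue that the resulting cross second-order terms are of strictly higher order in the magnitude of $\Delta_j(\boldsymbol{\theta}_j^t,\boldsymbol{z}_j^t)$, so that they can be absorbed into the approximation error and the final expression retains the same first-order accuracy as \cref{th:dice_e_1}.
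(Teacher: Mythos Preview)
Your proposal is correct and follows essentially the same route as the paper's own proof: decompose the two-hop ground truth into the one-hop piece plus the incremental second-hop loss differences, Taylor-expand the loss at each two-hop node $l$, unroll one gossip step to pull back the perturbation to intermediate nodes $k$, Taylor-expand the gradient difference at $k$ to produce the $(\boldsymbol{I}-\eta^{t+1}\boldsymbol{H})$ propagator, and finally substitute the base perturbation $\boldsymbol{\theta}_k^{t+1}-\boldsymbol{\theta}_{k\setminus \boldsymbol{z}_j^t}^{t+1}=\boldsymbol{W}_{k,j}^t\Delta_j$ from the one-hop argument. Your remarks on the path bookkeeping and on the consistency of linearization anchors are spot-on and in fact make explicit points that the paper's proof leaves implicit.
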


\begin{proof}
We begin from the definition in \cref{df:influence_r_order} where the two-hop DICE-GT influence is given by
\begin{align*}
\mathcal{I}_{\text{DICE-GT}}^{(2)}(\boldsymbol{z}_j^t,\boldsymbol{z}^{\prime})
=\sum_{k \in \mathcal{N}_{\text{\normalfont out}}^{(1)}(j)}
\sum_{l \in \mathcal{N}_{\text{\normalfont out}}^{(1)}(k)}
q_l\, \Bigl[
L(\boldsymbol{\theta}_{l}^{t+2};\boldsymbol{z}^{\prime})
- L\bigl(\boldsymbol{\theta}_{l \setminus \boldsymbol{z}_j^t}^{t+2};\boldsymbol{z}^{\prime}\bigr)
\Bigr].
\end{align*}
Subtracting the one-hop influence from both sides yields
\begin{align*}
\mathcal{I}_{\text{DICE-GT}}^{(2)}(\boldsymbol{z}_j^t,\boldsymbol{z}^{\prime})
-\mathcal{I}_{\text{DICE-GT}}^{(1)}(\boldsymbol{z}_j^t,\boldsymbol{z}^{\prime})
=\sum_{k \in \mathcal{N}_{\text{\normalfont out}}^{(1)}(j)}
\sum_{l \in \mathcal{N}_{\text{\normalfont out}}^{(1)}(k)}
q_l\, \Bigl[
L(\boldsymbol{\theta}_{l}^{t+2};\boldsymbol{z}^{\prime})
- L\bigl(\boldsymbol{\theta}_{l \setminus \boldsymbol{z}_j^t}^{t+2};\boldsymbol{z}^{\prime}\bigr)
\Bigr].
\end{align*}

A first-order Taylor expansion gives
\[
L(\boldsymbol{\theta}_{l}^{t+2};\boldsymbol{z}^{\prime})
- L\bigl(\boldsymbol{\theta}_{l \setminus \boldsymbol{z}_j^t}^{t+2};\boldsymbol{z}^{\prime}\bigr)
\approx \nabla L\bigl(\boldsymbol{\theta}_{l}^{t+2};\boldsymbol{z}^{\prime}\bigr)^\top 
\Bigl(\boldsymbol{\theta}_{l}^{t+2}-\boldsymbol{\theta}_{l \setminus \boldsymbol{z}_j^t}^{t+2}\Bigr).
\]
Next, the update rule in \cref{rk: algorithm} implies that
\[
\boldsymbol{\theta}_{l}^{t+2}=\sum_{m\in\mathcal{N}_{\text{\normalfont in}}(l)}
\boldsymbol{W}_{l,m}^{t+1}\,\boldsymbol{\theta}_{m}^{t+\frac{3}{2}},
\]
and similarly for \(\boldsymbol{\theta}_{l \setminus \boldsymbol{z}_j^t}^{t+2}\). Since the influence of \(\boldsymbol{z}_j^t\) reaches \(l\) only via intermediate nodes, we may write
\[
\boldsymbol{\theta}_{l}^{t+2}-\boldsymbol{\theta}_{l \setminus \boldsymbol{z}_j^t}^{t+2}
=\sum_{k \in \mathcal{N}_{\text{\normalfont out}}^{(1)}(j)}
\boldsymbol{W}_{l,k}^{t+1}\,
\Bigl(\boldsymbol{\theta}_{k}^{t+\frac{3}{2}}-\boldsymbol{\theta}_{k \setminus \boldsymbol{z}_j^t}^{t+\frac{3}{2}}\Bigr).
\]

At an intermediate node \(k\), the local update in iteration \(t+1\) gives
\[
\boldsymbol{\theta}_{k}^{t+\frac{3}{2}}
=\boldsymbol{\theta}_{k}^{t+1} - \eta^{t+1}\, \nabla L\bigl(\boldsymbol{\theta}_{k}^{t+1};\boldsymbol{z}_{k}^{t+1}\bigr).
\]
Therefore, the difference between the actual and perturbed updates is
\[
\boldsymbol{\theta}_{k}^{t+\frac{3}{2}}-\boldsymbol{\theta}_{k \setminus \boldsymbol{z}_j^t}^{t+\frac{3}{2}}
\approx \Bigl(\boldsymbol{I}-\eta^{t+1}\, \boldsymbol{H}\bigl(\boldsymbol{\theta}_{k}^{t+1};\boldsymbol{z}_{k}^{t+1}\bigr)\Bigr)
\Bigl(\boldsymbol{\theta}_{k}^{t+1}-\boldsymbol{\theta}_{k \setminus \boldsymbol{z}_j^t}^{t+1}\Bigr).
\]
At node \(k\), the difference between the parameters updated with and without the influence from \(z_j^t\) is given by
\[
\boldsymbol{\theta}_{k}^{t+1}-\boldsymbol{\theta}_{k \setminus \boldsymbol{z}_j^t}^{t+1}
=\boldsymbol{W}_{k,j}^{t}\Bigl(\boldsymbol{\theta}_{j}^{t+\frac{1}{2}}-\boldsymbol{\theta}_{j}^{t}\Bigr)
\approx -\eta^{t}\, \boldsymbol{W}_{k,j}^{t}\, \Delta_j(\boldsymbol{\theta}_j^t,\boldsymbol{z}_j^t).
\]
Hence, we obtain
\[
\boldsymbol{\theta}_{k}^{t+\frac{3}{2}}-\boldsymbol{\theta}_{k \setminus \boldsymbol{z}_j^t}^{t+\frac{3}{2}}
\approx -\eta^{t}\,\Bigl(\boldsymbol{I}-\eta^{t+1}\,\boldsymbol{H}
\bigl(\boldsymbol{\theta}_{k}^{t+1};\boldsymbol{z}_{k}^{t+1}\bigr)\Bigr)
\boldsymbol{W}_{k,j}^{t}\, \Delta_j(\boldsymbol{\theta}_j^t,\boldsymbol{z}_j^t).
\]
Substituting back into the expression for node \(l\), we have
\[
\boldsymbol{\theta}_{l}^{t+2}-\boldsymbol{\theta}_{l \setminus \boldsymbol{z}_j^t}^{t+2}
\approx -\eta^{t}\, \sum_{k \in \mathcal{N}_{\text{\normalfont out}}^{(1)}(j)}
\boldsymbol{W}_{l,k}^{t+1}\, \boldsymbol{W}_{k,j}^{t}\,
\Bigl(\boldsymbol{I}-\eta^{t+1}\,\boldsymbol{H}\bigl(\boldsymbol{\theta}_{k}^{t+1};\boldsymbol{z}_{k}^{t+1}\bigr)\Bigr)
\, \Delta_j(\boldsymbol{\theta}_j^t,\boldsymbol{z}_j^t).
\]
Plugging this into the Taylor expansion for the loss difference and multiplying by \(q_l\) yields
\begin{align*}
    &L(\boldsymbol{\theta}_{l}^{t+2};\boldsymbol{z}^{\prime})
    - L\bigl(\boldsymbol{\theta}_{l \setminus \boldsymbol{z}_j^t}^{t+2};\boldsymbol{z}^{\prime}\bigr)\\
    &\approx -\eta^{t}\, \nabla L\bigl(\boldsymbol{\theta}_{l}^{t+2};\boldsymbol{z}^{\prime}\bigr)^\top \,
    \sum_{k \in \mathcal{N}_{\text{\normalfont out}}^{(1)}(j)}
    \boldsymbol{W}_{l,k}^{t+1}\, \boldsymbol{W}_{k,j}^{t}\,
    \Bigl(\boldsymbol{I}-\eta^{t+1}\,\boldsymbol{H}\bigl(\boldsymbol{\theta}_{k}^{t+1}; \boldsymbol{z}_{k}^{t+1}\bigr)\Bigr)
    \, \Delta_j(\boldsymbol{\theta}_j^t,\boldsymbol{z}_j^t).
\end{align*}
Finally, summing over all intermediate nodes and multiplying by \(q_l\), we obtain
\begin{align*}
    &\mathcal{I}_{\text{DICE-E}}^{(2)}(\boldsymbol{z}_j^t,\boldsymbol{z}^{\prime})
    -\mathcal{I}_{\text{DICE-E}}^{(1)}(\boldsymbol{z}_j^t,\boldsymbol{z}^{\prime})\\
    &\approx -\sum_{k \in \mathcal{N}_{\text{\normalfont out}}^{(1)}(j)}
    \sum_{l \in \mathcal{N}_{\text{\normalfont out}}^{(1)}(k)}
    \eta^{t}\, q_l\, \boldsymbol{W}_{l,k}^{t+1}\, \boldsymbol{W}_{k,j}^{t}\,
    \nabla L\bigl(\boldsymbol{\theta}_{l}^{t+2};\boldsymbol{z}^{\prime}\bigr)^\top \,
    \Bigl(\boldsymbol{I}-\eta^{t+1}\,\boldsymbol{H}\bigl(\boldsymbol{\theta}_{k}^{t+1};\boldsymbol{z}_{k}^{t+1}\bigr)\Bigr)
    \, \Delta_j(\boldsymbol{\theta}_j^t,\boldsymbol{z}_j^t).
\end{align*}
This completes the proof.
\end{proof}

\subsection{Proof of \cref{th:dice_e_r}}\label{sec:proof:th3}

\textbf{Theorem 2} (Approximation of Multi-hop DICE-GT).
The $r$-hop DICE-GT value (see \cref{df:influence_r_order}) can be approximated as
\begin{align}
\mathcal{I}_{\text{\normalfont DICE-E}}^{(r)}(\boldsymbol{z}_j^t, \boldsymbol{z}^{\prime})
= -\sum_{\rho=0}^{r} \sum_{(k_1,\dots,k_{\rho})\in P_j^{(\rho)}}
\eta^{t}\, q_{k_\rho}\, \left( \prod_{s=1}^{\rho} \boldsymbol{W}_{k_s,k_{s-1}}^{t+s-1} \right)
\nabla L\bigl(\boldsymbol{\theta}_{k_\rho}^{t+\rho}; \boldsymbol{z}^{\prime}\bigr)^\top \nonumber\\[1mm]
\quad\times \left( \prod_{s=2}^{\rho} \Bigl( \boldsymbol{I} - \eta^{t+s-1}\, \boldsymbol{H}\bigl(\boldsymbol{\theta}_{k_s}^{t+s-1}; \boldsymbol{z}_{k_s}^{t+s-1}\bigr) \Bigr) \right)
\Delta_j(\boldsymbol{\theta}_j^t,\boldsymbol{z}_j^t),
\end{align}
where 
\[
\Delta_j(\boldsymbol{\theta}_j^t,\boldsymbol{z}_j^t) \triangleq \mathcal{O}_j(\boldsymbol{\theta}_j^t,\boldsymbol{z}_j^t)-\boldsymbol{\theta}_j^t,
\]
where $k_0 = j$, $P_j^{(\rho)}$ denotes the set of all sequences $(k_1, \dots, k_{\rho})$ such that $k_s \in \mathcal{N}_{\text{\normalfont out}}^{(1)}(k_{s-1})$ for $s=1,\dots,\rho$ (see \cref{df:r-hop-sequence}) and $\boldsymbol{H}(\boldsymbol{\theta}_{k_s}^{t+s}; \boldsymbol{z}_{k_s}^{t+s})$ is the Hessian matrix of $L$ with respect to $\boldsymbol{\theta}$ evaluated at $\boldsymbol{\theta}_{k_s}^{t+s}$ and data $\boldsymbol{z}_{k_s}^{t+s}$. 
For the cases when \(\rho = 0\) and \(\rho = 1\), the relevant product expressions are defined as identity matrices, thereby ensuring that the r-hop DICE-E remains well-defined.

\begin{proof}
From the definition in \cref{df:influence_r_order}, the $r$-hop influence is
\begin{align*}
\mathcal{I}_{\text{DICE-GT}}^{(r)}(\boldsymbol{z}_j^t,\boldsymbol{z}^{\prime})
=\sum_{\rho=0}^{r} \sum_{(k_1,\dots,k_\rho)\in P_j^{(\rho)}}
q_{k_\rho} \Bigl( L(\boldsymbol{\theta}_{k_\rho}^{t+\rho};\boldsymbol{z}^{\prime})
- L(\boldsymbol{\theta}_{k_\rho\setminus \boldsymbol{z}_j^t}^{t+\rho};\boldsymbol{z}^{\prime}) \Bigr).
\end{align*}
Here the $\rho=0$ term (with \(k_0 = j\)) corresponds to the direct influence on node \(j\). For any \(\rho\ge 1\), define the incremental influence as
\[
\Delta \mathcal{I}_{\text{DICE-GT}}^{(\rho)}(\boldsymbol{z}_j^t,\boldsymbol{z}^{\prime})
=\mathcal{I}_{\text{DICE-GT}}^{(\rho)}(\boldsymbol{z}_j^t,\boldsymbol{z}^{\prime})
-\mathcal{I}_{\text{DICE-GT}}^{(\rho-1)}(\boldsymbol{z}_j^t,\boldsymbol{z}^{\prime}).
\]
Thus,
\[
\Delta \mathcal{I}_{\text{DICE-GT}}^{(\rho)}(\boldsymbol{z}_j^t,\boldsymbol{z}^{\prime})
=\sum_{(k_1,\dots,k_\rho)\in P_j^{(\rho)}}
q_{k_\rho}\left[ L(\boldsymbol{\theta}_{k_\rho}^{t+\rho};\boldsymbol{z}^{\prime})
- L(\boldsymbol{\theta}_{k_\rho\setminus \boldsymbol{z}_j^t}^{t+\rho};\boldsymbol{z}^{\prime}) \right].
\]
A first-order Taylor expansion gives
\[
L(\boldsymbol{\theta}_{k_\rho}^{t+\rho};\boldsymbol{z}^{\prime})
- L(\boldsymbol{\theta}_{k_\rho\setminus \boldsymbol{z}_j^t}^{t+\rho};\boldsymbol{z}^{\prime})
\approx \nabla L\bigl(\boldsymbol{\theta}_{k_\rho}^{t+\rho};\boldsymbol{z}^{\prime}\bigr)^\top \Bigl[
\boldsymbol{\theta}_{k_\rho}^{t+\rho} - \boldsymbol{\theta}_{k_\rho\setminus \boldsymbol{z}_j^t}^{t+\rho} \Bigr].
\]

Our goal is to express the parameter change 
\(\Delta\boldsymbol{\theta}_{k_\rho} \triangleq \boldsymbol{\theta}_{k_\rho}^{t+\rho} - \boldsymbol{\theta}_{k_\rho\setminus \boldsymbol{z}_j^t}^{t+\rho}\) in terms of the propagated perturbation from node \(j\).

According to the gossip update in \cref{rk: algorithm}, for any node \(k_\rho\) we have
\begin{align*}
\boldsymbol{\theta}_{k_\rho}^{t+\rho} &= \sum_{m\in \mathcal{N}_{\text{in}}(k_\rho)} \boldsymbol{W}_{k_\rho,m}^{t+\rho-1}\, \boldsymbol{\theta}_m^{t+\rho-\frac{1}{2}}, \\
\boldsymbol{\theta}_{k_\rho\setminus \boldsymbol{z}_j^t}^{t+\rho} &= \sum_{m\in \mathcal{N}_{\text{in}}(k_\rho)} \boldsymbol{W}_{k_\rho,m}^{t+\rho-1}\, \boldsymbol{\theta}_{m\setminus \boldsymbol{z}_j^t}^{t+\rho-\frac{1}{2}}.
\end{align*}
Since only the predecessor \(k_{\rho-1}\) is affected by the perturbation from \(\boldsymbol{z}_j^t\), we obtain
\begin{align*}
\Delta\boldsymbol{\theta}_{k_\rho}
&=\boldsymbol{W}_{k_\rho,k_{\rho-1}}^{t+\rho-1} \Bigl(
\boldsymbol{\theta}_{k_{\rho-1}}^{t+\rho-\frac{1}{2}}
-\boldsymbol{\theta}_{k_{\rho-1}\setminus \boldsymbol{z}_j^t}^{t+\rho-\frac{1}{2}} \Bigr).
\end{align*}
At node \(k_{\rho-1}\), using the local update rule,
\[
\boldsymbol{\theta}_{k_{\rho-1}}^{t+\rho-\frac{1}{2}}
=\mathcal{O}_{k_{\rho-1}}(\boldsymbol{\theta}_{k_{\rho-1}}^{t+\rho-1},\boldsymbol{z}_{k_{\rho-1}}^{t+\rho-1}),
\]
the difference can be written as
\begin{align*}
\boldsymbol{\theta}_{k_{\rho-1}}^{t+\rho-\frac{1}{2}}
-\boldsymbol{\theta}_{k_{\rho-1}\setminus \boldsymbol{z}_j^t}^{t+\rho-\frac{1}{2}}
&=\Bigl(\boldsymbol{\theta}_{k_{\rho-1}}^{t+\rho-1}
-\boldsymbol{\theta}_{k_{\rho-1}\setminus \boldsymbol{z}_j^t}^{t+\rho-1}\Bigr) \nonumber\\[1mm]
&\quad -\eta^{t+\rho-1} \Bigl( \nabla L(\boldsymbol{\theta}_{k_{\rho-1}}^{t+\rho-1};\boldsymbol{z}_{k_{\rho-1}}^{t+\rho-1})
-\nabla L(\boldsymbol{\theta}_{k_{\rho-1}\setminus \boldsymbol{z}_j^t}^{t+\rho-1};\boldsymbol{z}_{k_{\rho-1}}^{t+\rho-1}) \Bigr).
\end{align*}
A further first-order Taylor expansion approximates
\[
\nabla L(\boldsymbol{\theta}_{k_{\rho-1}}^{t+\rho-1};\boldsymbol{z}_{k_{\rho-1}}^{t+\rho-1})
-\nabla L(\boldsymbol{\theta}_{k_{\rho-1}\setminus \boldsymbol{z}_j^t}^{t+\rho-1};\boldsymbol{z}_{k_{\rho-1}}^{t+\rho-1})
\approx \boldsymbol{H}_{k_{\rho-1}}^{t+\rho-1}
\Bigl( \boldsymbol{\theta}_{k_{\rho-1}}^{t+\rho-1}
-\boldsymbol{\theta}_{k_{\rho-1}\setminus \boldsymbol{z}_j^t}^{t+\rho-1} \Bigr).
\]
Thus,
\[
\Delta\boldsymbol{\theta}_{k_\rho}
\approx \boldsymbol{W}_{k_\rho,k_{\rho-1}}^{t+\rho-1}
\Bigl( \boldsymbol{I}-\eta^{t+\rho-1}\boldsymbol{H}_{k_{\rho-1}}^{t+\rho-1} \Bigr)
\Bigl( \boldsymbol{\theta}_{k_{\rho-1}}^{t+\rho-1} -\boldsymbol{\theta}_{k_{\rho-1}\setminus \boldsymbol{z}_j^t}^{t+\rho-1}\Bigr).
\]
By recursively unrolling this relation from \(s=\rho\) down to \(s=1\), we deduce
\begin{align*}
\boldsymbol{\theta}_{k_\rho}^{t+\rho} - \boldsymbol{\theta}_{k_\rho\setminus \boldsymbol{z}_j^t}^{t+\rho}
\approx \left( \prod_{s=1}^{\rho} \boldsymbol{W}_{k_s,k_{s-1}}^{t+s-1} \prod_{s=2}^{\rho}\Bigl(\boldsymbol{I}-\eta^{t+s-1}\boldsymbol{H}_{k_{s-1}}^{t+s-1}\Bigr) \right)
\Bigl( \boldsymbol{\theta}_{k_1}^{t+1}-\boldsymbol{\theta}_{k_1\setminus \boldsymbol{z}_j^t}^{t+1}\Bigr).
\end{align*}
At the base level, the local update at node \(j\) gives
\[
\boldsymbol{\theta}_{k_1}^{t+1}-\boldsymbol{\theta}_{k_1\setminus \boldsymbol{z}_j^t}^{t+1}
=-\eta^{t}\, \boldsymbol{W}_{k_1,j}^{t}\,\Delta_j(\boldsymbol{\theta}_j^t,\boldsymbol{z}_j^t).
\]
Hence,
\begin{align*}
\boldsymbol{\theta}_{k_\rho}^{t+\rho} - \boldsymbol{\theta}_{k_\rho\setminus \boldsymbol{z}_j^t}^{t+\rho}
\approx -\eta^{t}\, \left( \prod_{s=1}^{\rho} \boldsymbol{W}_{k_s,k_{s-1}}^{t+s-1}\right)
\Biggl( \prod_{s=2}^{\rho}\Bigl(\boldsymbol{I}-\eta^{t+s-1}\boldsymbol{H}_{k_{s-1}}^{t+s-1}\Bigr) \Biggr)
\Delta_j(\boldsymbol{\theta}_j^t,\boldsymbol{z}_j^t).
\end{align*}

Substituting this back into the Taylor expansion for the loss difference, we have
\begin{align*}
\Delta \mathcal{I}_{\text{DICE-GT}}^{(\rho)}(\boldsymbol{z}_j^t,\boldsymbol{z}^{\prime})
&\approx -\sum_{(k_1,\dots,k_\rho)\in P_j^{(\rho)}}
\eta^{t}\, q_{k_\rho}\, \left( \prod_{s=1}^{\rho} \boldsymbol{W}_{k_s,k_{s-1}}^{t+s-1}\right)
\nabla L\bigl(\boldsymbol{\theta}_{k_\rho}^{t+\rho};\boldsymbol{z}^{\prime}\bigr)^\top \nonumber\\[1mm]
&\quad\times \left( \prod_{s=2}^{\rho}\Bigl(\boldsymbol{I}-\eta^{t+s-1}\boldsymbol{H}_{k_{s-1}}^{t+s-1}\Bigr) \right)
\Delta_j(\boldsymbol{\theta}_j^t,\boldsymbol{z}_j^t).
\end{align*}
Summing over \(\rho=0\) to \(r\) (with the \(\rho=0\) term accounting for the direct influence at node \(j\)) yields
\begin{align*}
\mathcal{I}_{\text{\normalfont DICE-E}}^{(r)}(\boldsymbol{z}_j^t,\boldsymbol{z}^{\prime})
= -\sum_{\rho=0}^{r} \sum_{(k_1,\dots,k_\rho)\in P_j^{(\rho)}}
\eta^{t}\, q_{k_\rho}\, \left( \prod_{s=1}^{\rho} \boldsymbol{W}_{k_s,k_{s-1}}^{t+s-1} \right)
\nabla L\bigl(\boldsymbol{\theta}_{k_\rho}^{t+\rho};\boldsymbol{z}^{\prime}\bigr)^\top \nonumber\\[1mm]
\quad\times \left( \prod_{s=2}^{\rho}\Bigl(\boldsymbol{I}-\eta^{t+s-1}\boldsymbol{H}\bigl(\boldsymbol{\theta}_{k_s}^{t+s-1};\boldsymbol{z}_{k_s}^{t+s-1}\bigr)\Bigr) \right)
\Delta_j(\boldsymbol{\theta}_j^t,\boldsymbol{z}_j^t).
\end{align*}
This concludes the proof.
\end{proof}

\section{Additional Experiments}
\label{sec: add_experiments}

\subsection{Details of Experimental Setup}
\label{sec: implem_details}

\textbf{Computational Resources.} The experiments are conducted on a computing facility equipped with 80 GB NVIDIA\textsuperscript{\textregistered} A100\textsuperscript{\texttrademark} GPUs.

We employ the vanilla mini-batch Adapt-Then-Communicate version of Decentralized SGD (\citep{4545274}, see \cref{rk: algorithm}) with commonly used network topologies \citep{ying2021exponential} to train three-layer MLPs \citep{rumelhart1986learning}, three-layer CNNs \citep{lecun1998gradient}, and ResNet-18 \citep{he2016identity} on subsets of MNIST \citep{lecun1998gradient}, CIFAR-10, CIFAR-100 \citep{krizhevsky2009learning}, and Tiny ImageNet \citep{le2015tiny}. The number of participants (one GPU as a participant) is set to 16 and 32, with each participant holding 512 samples. For sensitivity analysis, we evaluate the stability of results under hyperparameter adjustments. The local batch size is varied as 16, 64, and 128 per participant, while the learning rate is set as 0.1 and 0.01 without decay. The code will be made publicly available.

\subsection{Influence Alignment}
\label{sec: alignment}

In this experiments, we evaluate the alignment between one-hop DICE-GT (see \cref{df:1_order_influence}) and its first-order approximation, one-hop DICE-E (see \cref{th:dice_e_1}). One-hop DICE-E $\scriptstyle{\mathcal{I}_{\text{DICE-E}}^{(1)}(\mathcal{B}_j^t, \boldsymbol{z}^{\prime})}$ is computed as the sum of one-sample DICE-E within the mini-batch $\mathcal{B}_j^t$ thanks to the additivity (see \cref{eq: additivity}). DICE-GT $\scriptstyle{\mathcal{I}{\text{DICE-GT}}^{(1)}(\mathcal{B}_j^t, \boldsymbol{z}^{\prime})}$ is calculated by measuring the loss reduction after removing $\mathcal{B}_j^t$ from node $j$ at the $t$-th iteration.
In the following Figures, each plot contains 30 points, with each point representing the result of a single comparison of one-hop DICE-GT and the estimated influence DICE-E. Strong alignments of DICE-GT and DICE-E are observed across datasets (CIFAR-10, CIFAR-100 and Tiny ImageNet) and model architectures (CNN and MLP).

\begin{figure*}[ht!]
\centering
\includegraphics[width=1\linewidth]{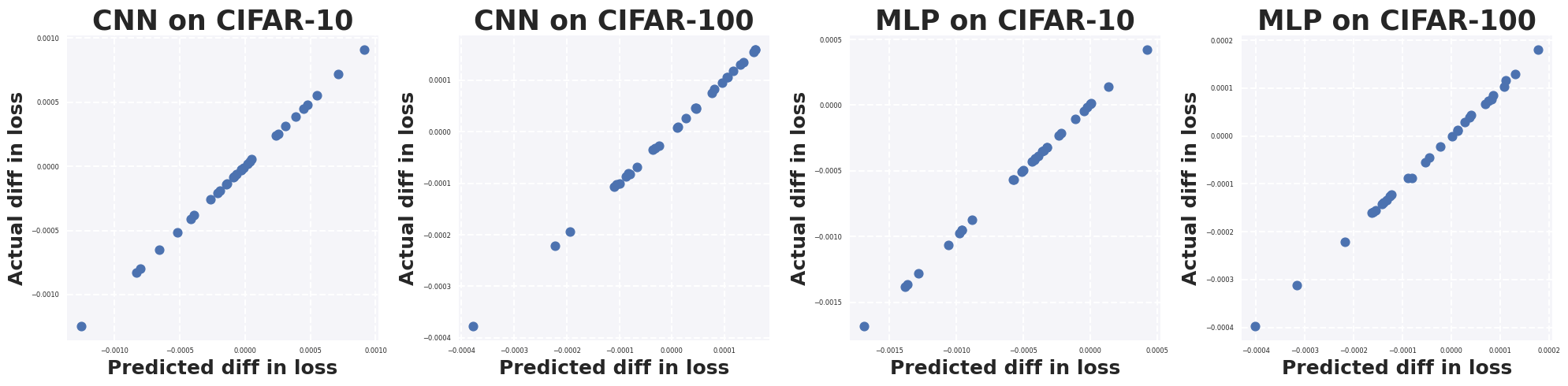}  
\caption{Alignment between one-hop DICE-GT (vertical axis) and DICE-E (horizontal axis) on a 16-node exponential graph. Each node uses a 512-sample subset of CIFAR-10 or CIFAR-100. Models are trained for 5 epochs with a batch size of 128 and a learning rate of 0.1.}
\label{fig: alignment-cifar10-exponential-16-128-0.1}
\end{figure*}

\begin{figure*}[ht!]
\centering
\includegraphics[width=1\linewidth]{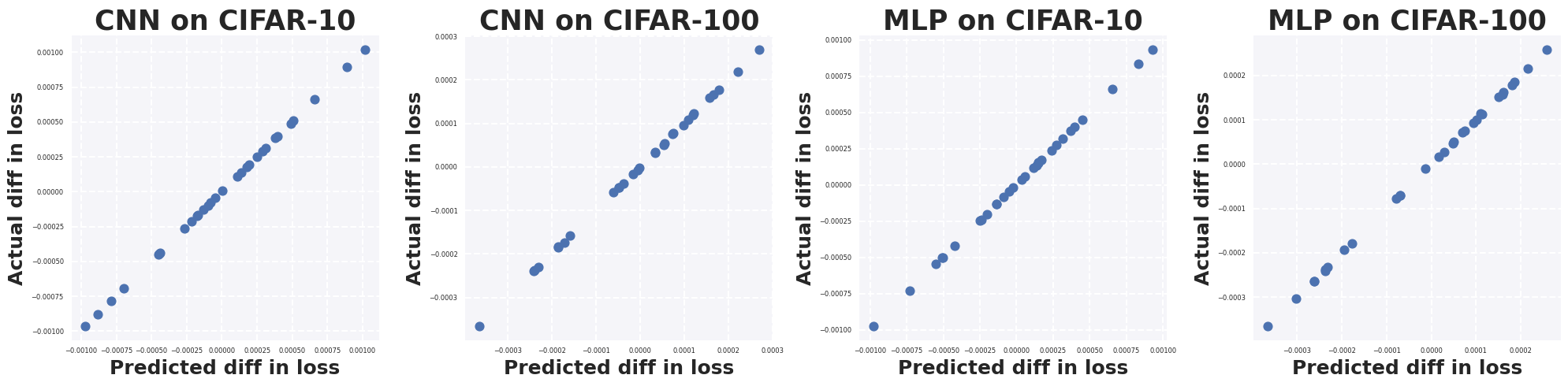}  
\caption{Alignment between one-hop DICE-GT (vertical axis) and DICE-E (horizontal axis) on a 32-node exponential graph. Each node uses a 512-sample subset of CIFAR-10 or CIFAR-100. Models are trained for 5 epochs with a batch size of 128 and a learning rate of 0.1.}
\label{fig: alignment-cifar10-exponential-32-128-0.1}
\end{figure*}

We conduct additional sensitivity analysis experiments to evaluate the robustness of DICE-E under varying hyperparameters, including learning rate, batch size, and training epoch. These results demonstrate that DICE-E provides a strong approximation of DICE-GT, achieving consistent alignment across datasets (CIFAR-10 and CIFAR-100) and model architectures (CNN and MLP) under different batch sizes, learning rates, and training epochs.

\subsubsection{Sensitivity Analysis on Batch Size}

We conduct experiments to evaluate the robustness of DICE-E under varying batch sizes.

\begin{figure*}[h!]
\centering
\includegraphics[width=1\linewidth]{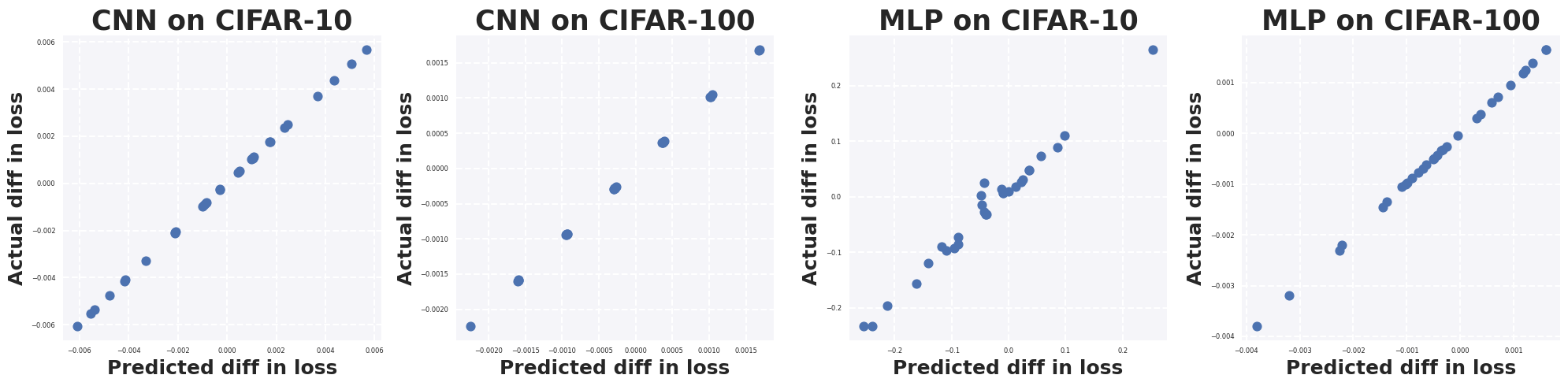}  
\caption{Alignment between one-hop DICE-GT (vertical axis) and DICE-E (horizontal axis) on a 32-node ring graph. Each node uses a 512-sample subset of CIFAR-10 or CIFAR-100. Models are trained for 5 epochs with a batch size of \underline{16} and a learning rate of 0.1.}
\label{fig: alignment-cifar10-ring-32-16-0.1}
\end{figure*}

\begin{figure*}[h!]
\centering
\includegraphics[width=1\linewidth]{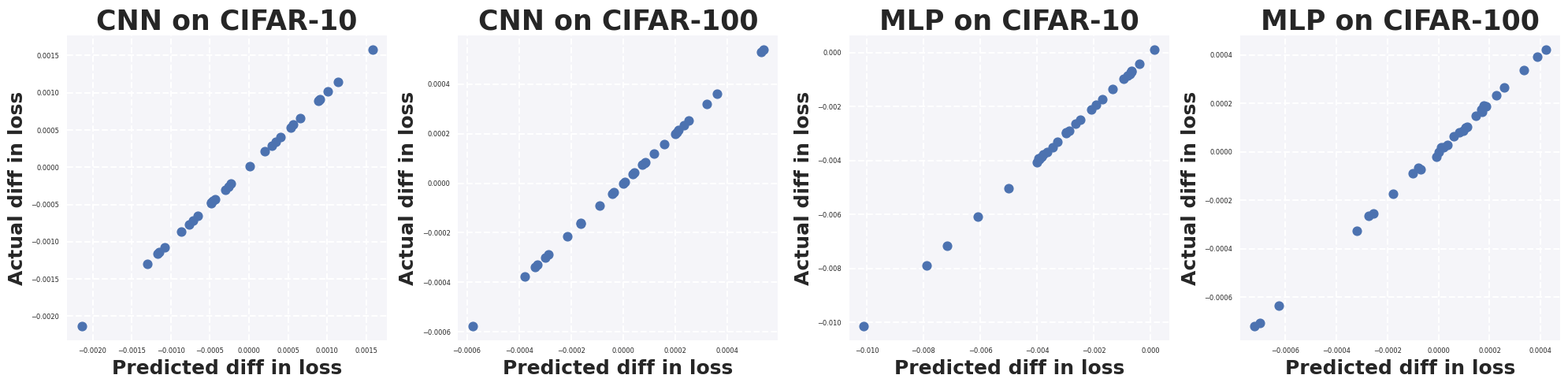}  
\caption{Alignment between one-hop DICE-GT (vertical axis) and DICE-E (horizontal axis) on a 32-node ring graph. Each node uses a 512-sample subset of CIFAR-10 or CIFAR-100. Models are trained for 5 epochs with a batch size of \underline{64} and a learning rate of 0.1.}
\label{fig: alignment-cifar10-ring-32-64-0.1}
\end{figure*}

\begin{figure*}[h!]
\centering
\includegraphics[width=1\linewidth]{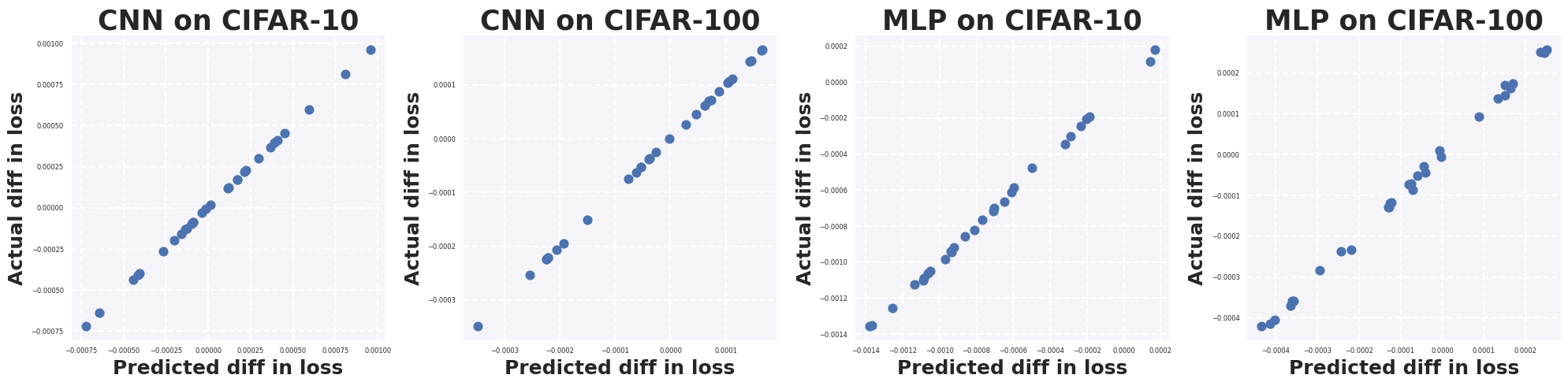}  
\caption{Alignment between one-hop DICE-GT (vertical axis) and DICE-E (horizontal axis) on a 32-node ring graph. Each node uses a 512-sample subset of CIFAR-10 or CIFAR-100. Models are trained for 5 epochs with a batch size of \underline{128} and a learning rate of 0.1.}
\label{fig: alignment-cifar10-ring-32-128-0.1}
\end{figure*}

\subsubsection{Sensitivity Analysis on Learning Rate and the Number of Nodes}

We also condcut experiments to evaluate the robustness of DICE-E under varying learning rates and the number of nodes.

\begin{figure*}[ht!]
\centering
\includegraphics[width=1\linewidth]{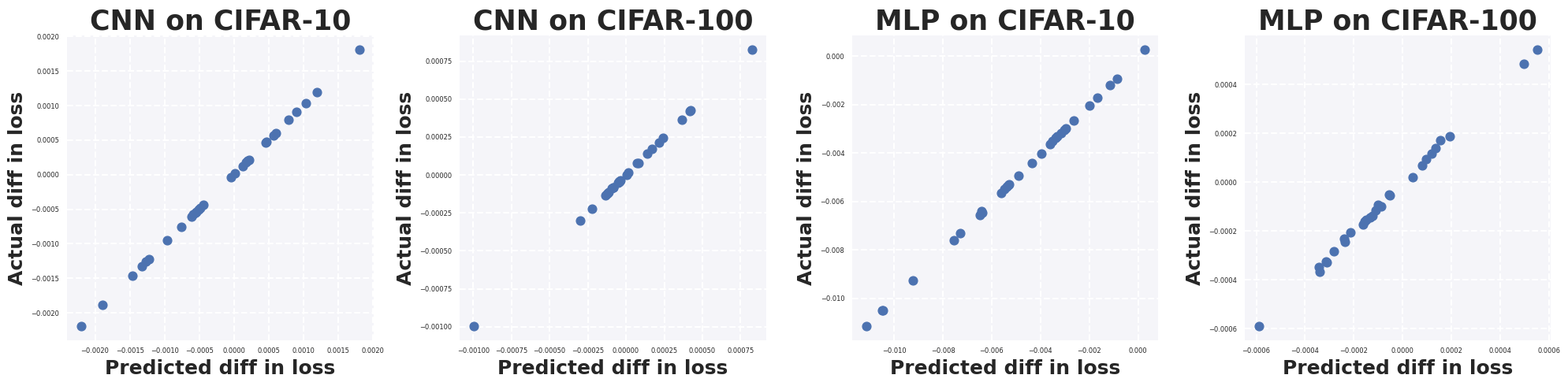}  
\caption{Alignment between one-hop DICE-GT (vertical axis) and DICE-E (horizontal axis) on a \underline{16-node} ring graph. Each node uses a 512-sample subset of CIFAR-10 or CIFAR-100. Models are trained for 5 epochs with a batch size of 64 and a learning rate of \underline{0.1}.}
\label{fig: alignment-cifar10-ring-16-64-0.1}
\end{figure*}

\begin{figure*}[ht!]
\centering
\includegraphics[width=1\linewidth]{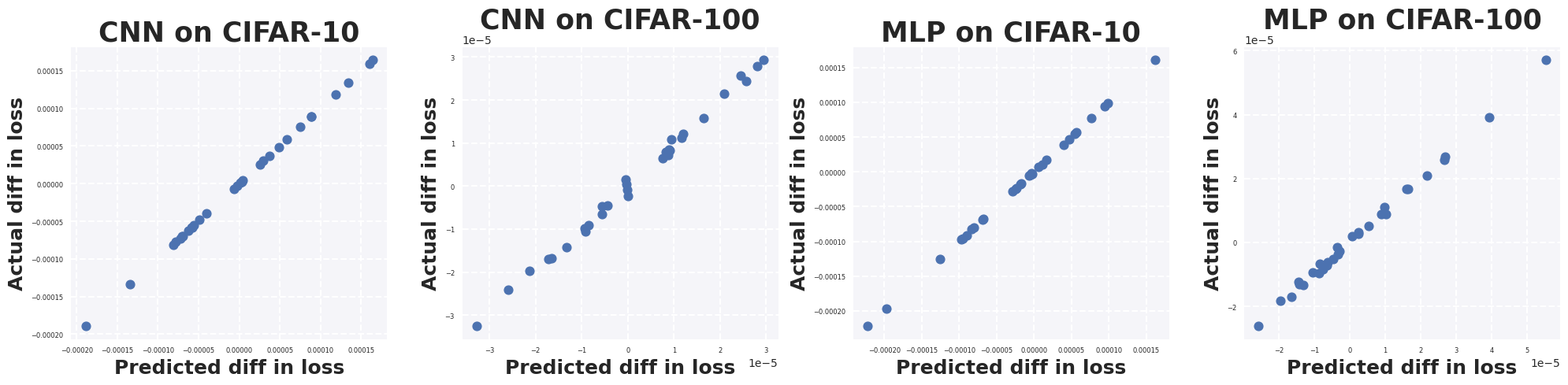}  
\caption{Alignment between one-hop DICE-GT (vertical axis) and DICE-E (horizontal axis) on a \underline{16-node} ring graph. Each node uses a 512-sample subset of CIFAR-10 or CIFAR-100. Models are trained for 5 epochs with a batch size of 64 and a learning rate of \underline{0.01}.}
\label{fig: alignment-cifar10-ring-16-64-0.01}
\end{figure*}

\begin{figure*}[ht!]
\centering
\includegraphics[width=1\linewidth]{Section/Figure/E1-Rebuttal-0.1/size32_ring_batchsize64.png}  
\caption{Alignment between one-hop DICE-GT (vertical axis) and DICE-E (horizontal axis) on a \underline{32-node} ring graph. Each node uses a 512-sample subset of CIFAR-10 or CIFAR-100. Models are trained for 5 epochs with a batch size of 64 and a learning rate of \underline{0.1}.}
\end{figure*}

\begin{figure*}[ht!]
\centering
\includegraphics[width=1\linewidth]{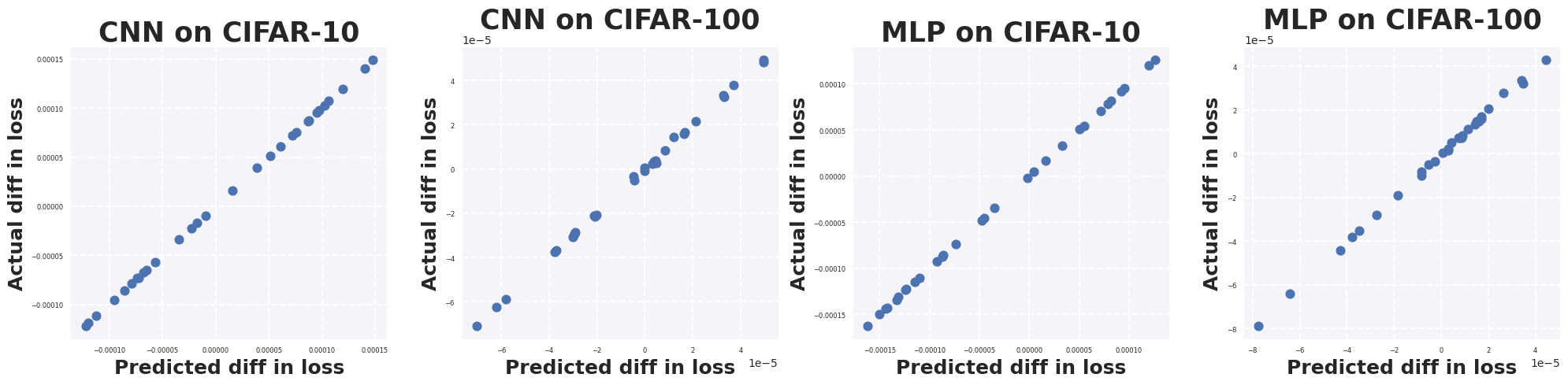}  
\caption{Alignment between one-hop DICE-GT (vertical axis) and DICE-E (horizontal axis) on a \underline{32-node} ring graph. Each node uses a 512-sample subset of CIFAR-10 or CIFAR-100. Models are trained for 5 epochs with a batch size of 64 and a learning rate of \underline{0.01}.}
\label{fig: alignment-cifar10-ring-32-64-0.01}
\end{figure*}

\newpage
\subsubsection{Sensitivity Analysis on Training Epochs}

We conduct experiments to evaluate the robustness of DICE-E under varying training epochs.

\begin{figure*}[ht!]
\centering
\includegraphics[width=1\linewidth]{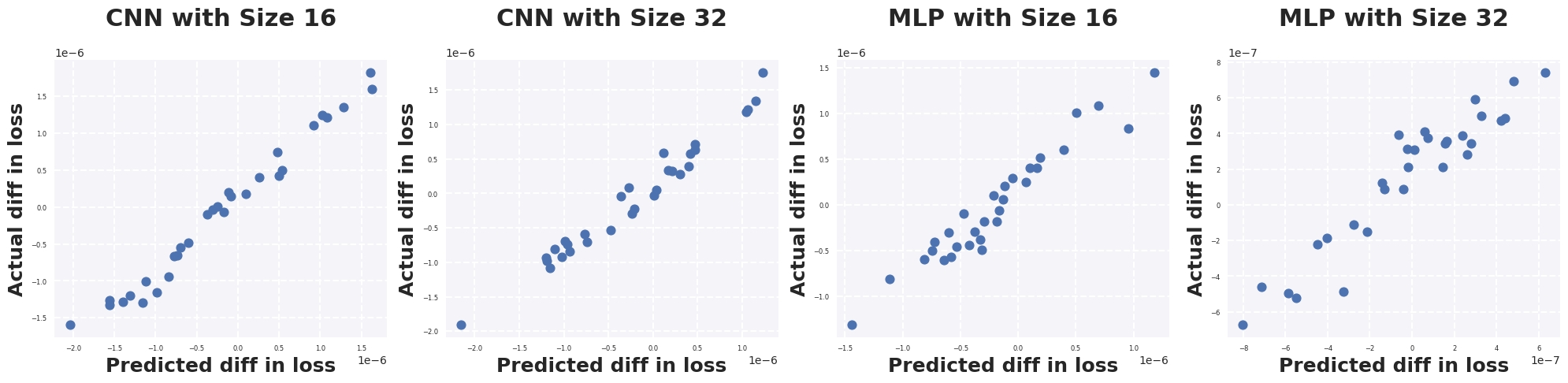}  
\caption{Alignment between one-hop DICE-GT (vertical axis) and DICE-E (horizontal axis) on 16 and 32-node exponential graphs. Each node uses a 8192-sample subset of \underline{Tiny ImageNet}. Models are trained for \underline{10} epochs with a batch size of 128 and a learning rate of 0.1.}
\label{fig: alignment-cifar10-exp-32-64-0.01-10epoch}
\end{figure*}

\begin{figure*}[ht!]
\centering
\includegraphics[width=1\linewidth]{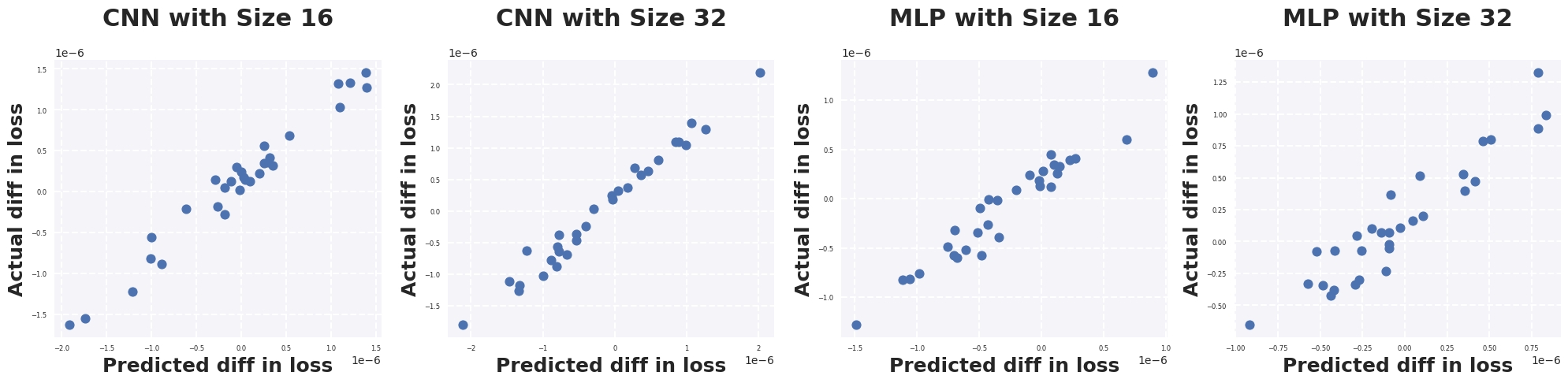}  
\caption{Alignment between one-hop DICE-GT (vertical axis) and DICE-E (horizontal axis) on 16 and 32-node exponential graphs. Each node uses a 8192-sample subset of \underline{Tiny ImageNet}. Models are trained for \underline{20} epochs with a batch size of 128 and a learning rate of 0.1.}
\label{fig: alignment-cifar10-exp-32-64-0.01-20epoch}
\end{figure*}

\begin{figure*}[ht!]
\centering
\includegraphics[width=1\linewidth]{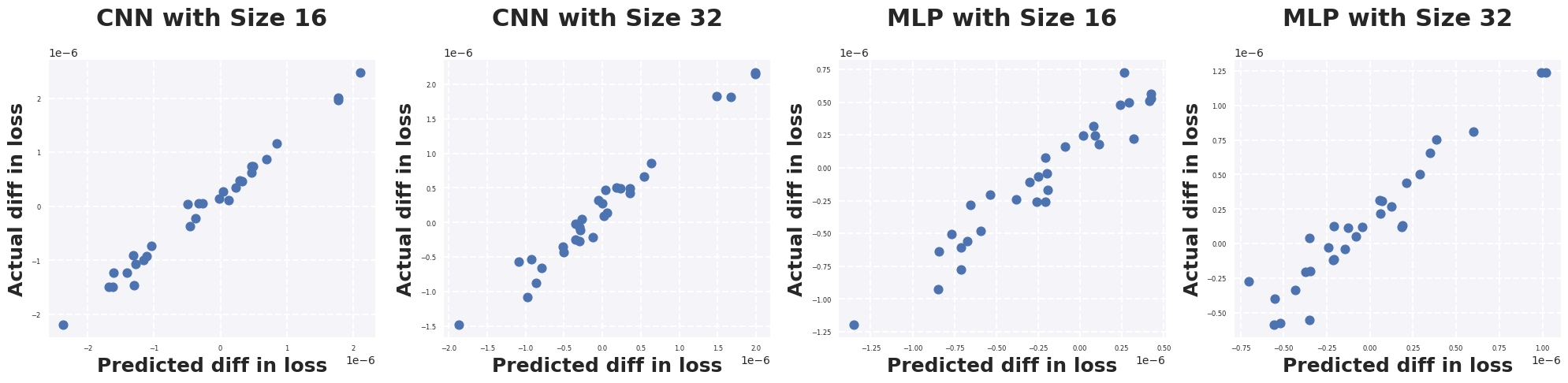}  
\caption{Alignment between one-hop DICE-GT (vertical axis) and DICE-E (horizontal axis) on a 16 and 32-node exponential graph. Each node uses a 8192-sample subset of \underline{Tiny ImageNet}. Models are trained for \underline{30} epochs with a batch size of 128 and a learning rate of 0.1.}
\label{fig: alignment-cifar10-exp-32-64-0.01-30epoch}
\end{figure*}

\newpage
\subsection{Anomaly Detection}
\label{sec:add_anomaly}
We can also use the proximal influence metric to effectively detect anomalies. Specifically, anomalies are identified by observing significantly higher or lower proximal influence values compared to normal data instances. In our setup, anomalies are generated through random label flipping or by adding random Gaussian noise to features. The following Figures illustrates that the most anomalies (in red) is detectable with proximal influence values.

\subsubsection{Random label flipping}

\begin{figure*}[ht!]
\centering
\includegraphics[width=1\linewidth]{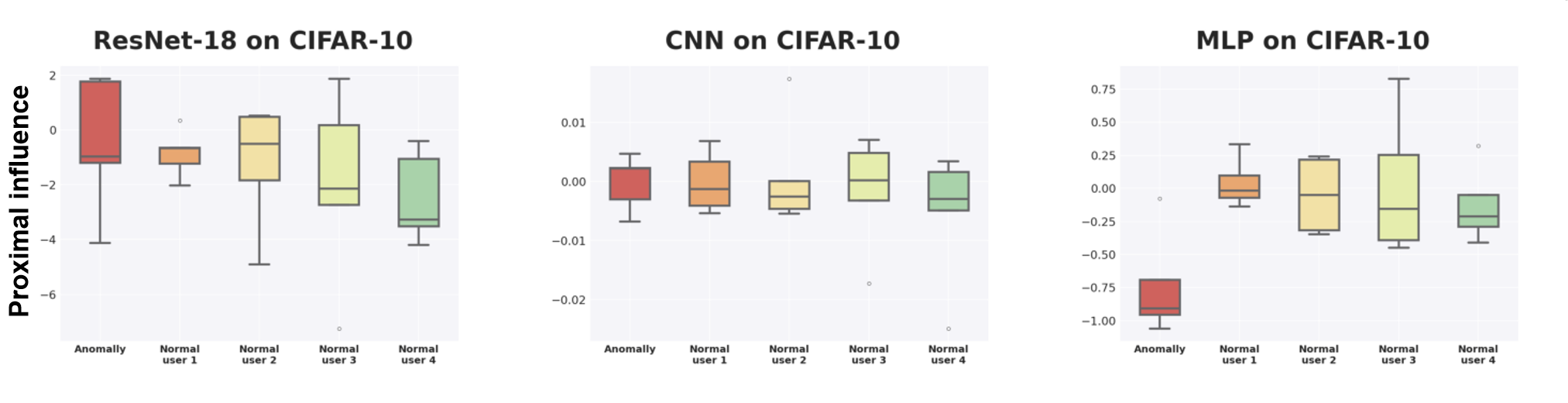}  
\caption{Anomaly detection on exponential graph with 32 nodes. 
Each node uses a 512-sample subset of \underline{CIFAR-10}. Models are trained for 5 epochs with a batch size of 16 and a learning rate of 0.1.
In a 32-node exponential graph, each participant connects with 5 neighbors, where the neighbor in red is set as an anomaly by random \underline{label flipping}, while the other four are normal participants.}
\label{fig: anomaly-cifar10-exp-32-bs16-lr0.1}
\end{figure*}

\begin{figure*}[ht!]
\centering
\includegraphics[width=1\linewidth]{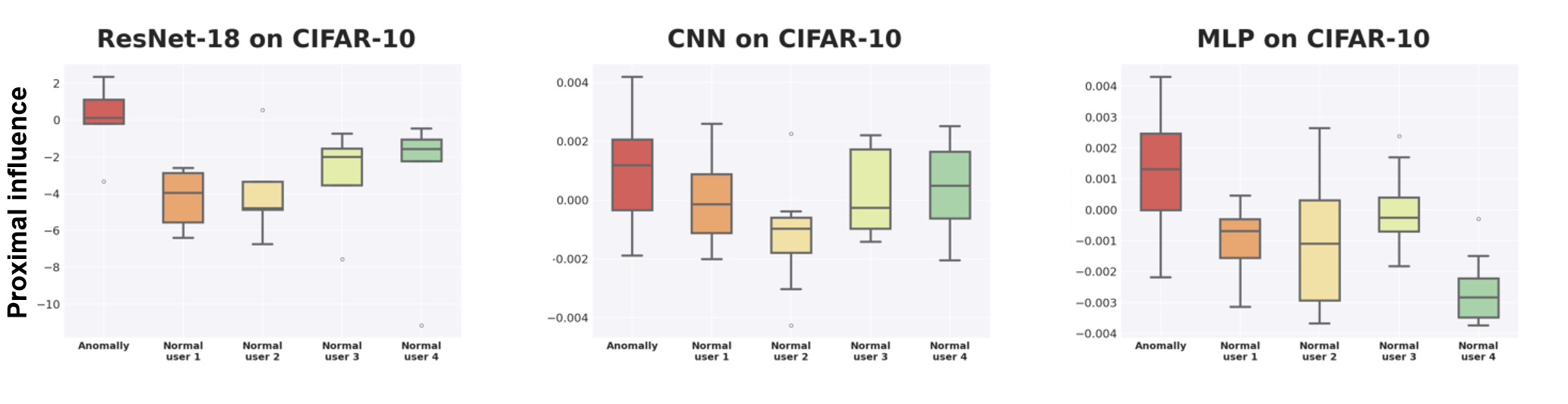}  
\caption{Anomaly detection on exponential graph with 32 nodes. 
Each node uses a 512-sample subset of \underline{CIFAR-10}. Models are trained for 5 epochs with a batch size of 64 and a learning rate of 0.1.
In a 32-node exponential graph, each participant connects with 5 neighbors, where the neighbor in red is set as an anomaly by random \underline{label flipping}, while the other four are normal participants.}
\label{fig: anomaly-cifar10-exp-32-bs64-lr0.1}
\end{figure*}

\begin{figure*}[ht!]
\centering
\includegraphics[width=1\linewidth]{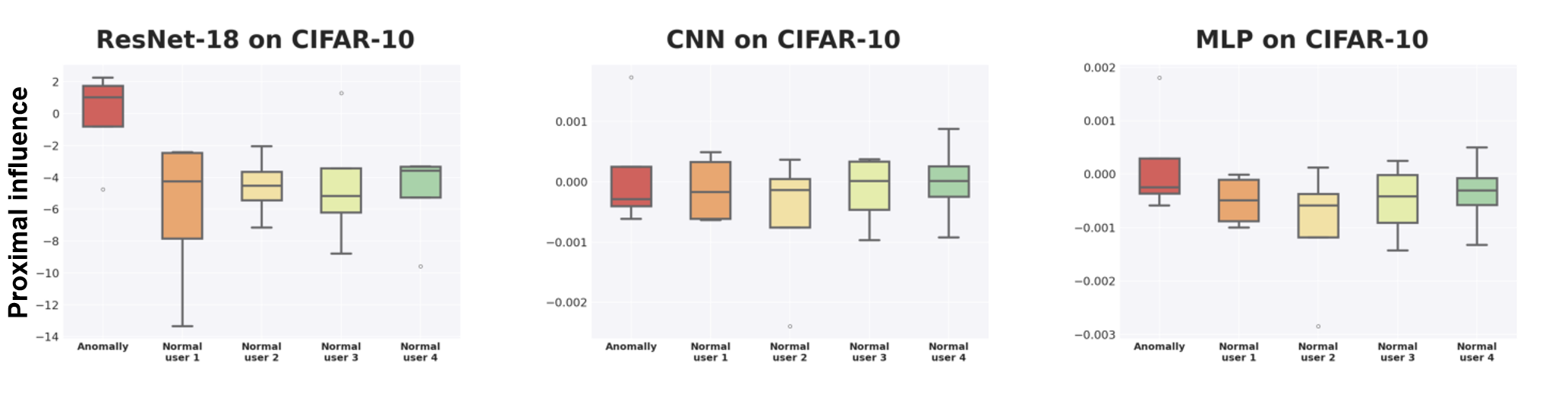}  
\caption{Anomaly detection on exponential graph with 32 nodes. 
Each node uses a 512-sample subset of \underline{CIFAR-10}. Models are trained for 5 epochs with a batch size of 128 and a learning rate of 0.1.
In a 32-node exponential graph, each participant connects with 5 neighbors, where the neighbor in red is set as an anomaly by random \underline{label flipping}, while the other four are normal participants.}
\label{fig: anomaly-cifar10-exp-32-bs128-lr0.1}
\end{figure*}

\begin{figure*}[ht!]
\centering
\includegraphics[width=1\linewidth]{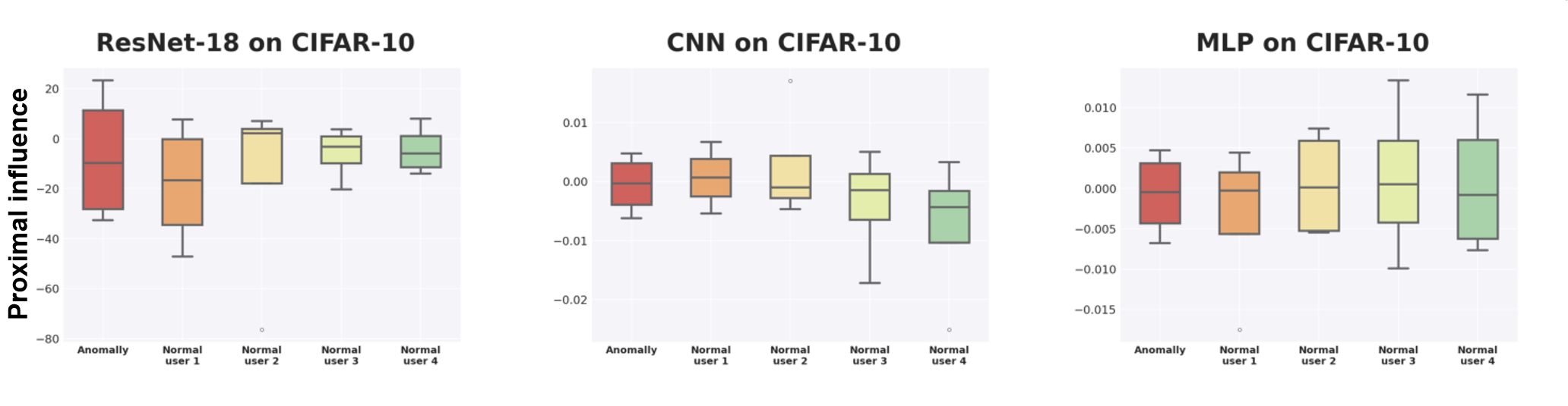}  
\caption{Anomaly detection on exponential graph with 32 nodes. 
Each node uses a 512-sample subset of \underline{CIFAR-10}. Models are trained for 5 epochs with a batch size of 16 and a learning rate of 0.01.
In a 32-node exponential graph, each participant connects with 5 neighbors, where the neighbor in red is set as an anomaly by random \underline{label flipping}, while the other four are normal participants.}
\label{fig: anomaly-cifar10-exp-32-bs16-lr0.01}
\end{figure*}

\begin{figure*}[ht!]
\centering
\includegraphics[width=1\linewidth]{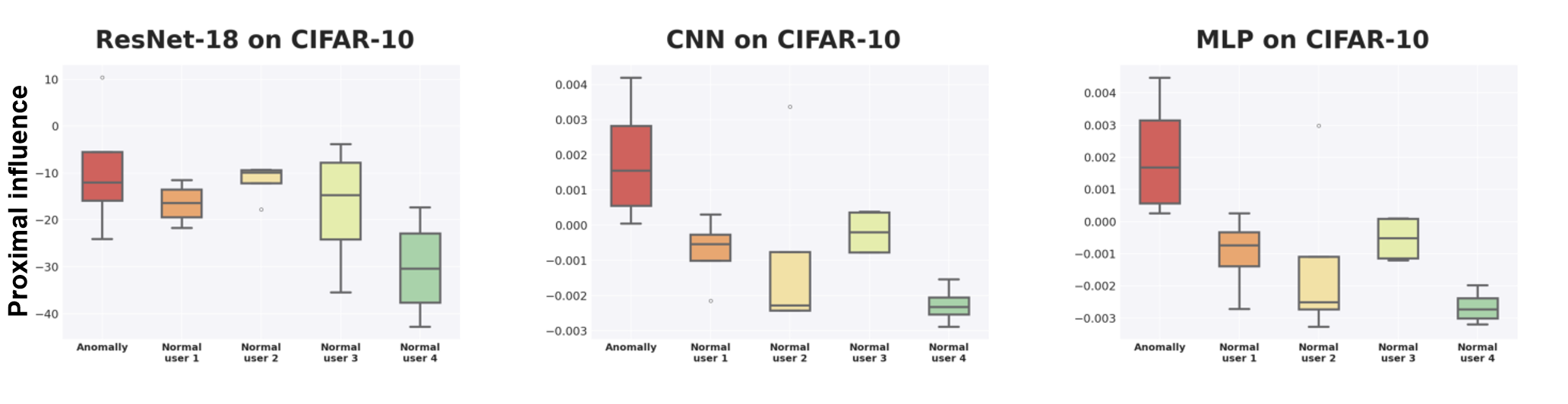}  
\caption{Anomaly detection on exponential graph with 32 nodes. 
Each node uses a 512-sample subset of \underline{CIFAR-10}. Models are trained for 5 epochs with a batch size of 64 and a learning rate of 0.01.
In a 32-node exponential graph, each participant connects with 5 neighbors, where the neighbor in red is set as an anomaly by random \underline{label flipping}, while the other four are normal participants.}
\label{fig: anomaly-cifar10-exp-32-bs64-lr0.01}
\end{figure*}

\begin{figure*}[ht!]
\centering
\includegraphics[width=1\linewidth]{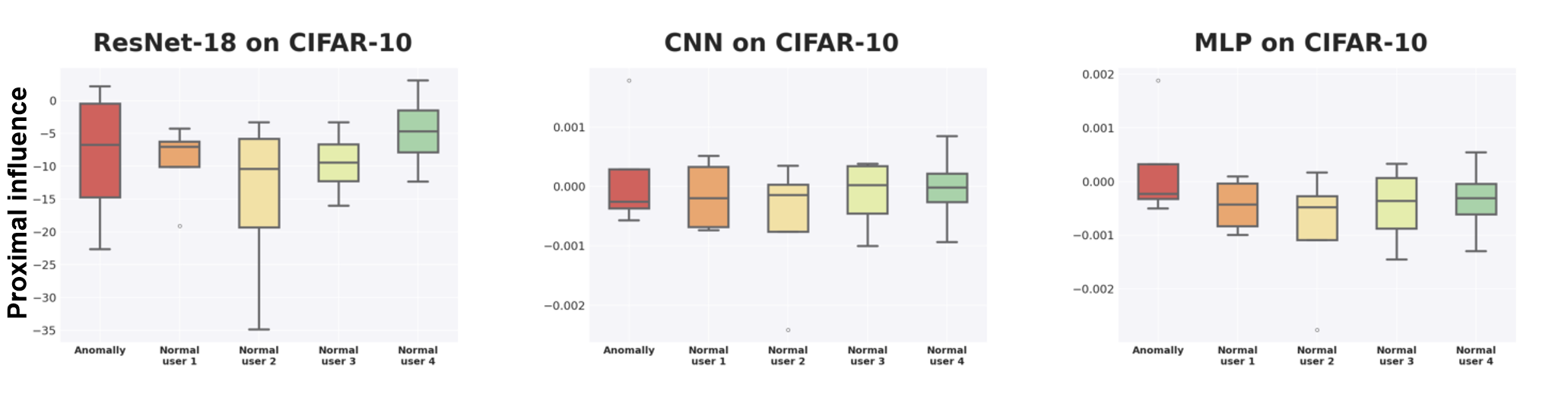}  
\caption{Anomaly detection on exponential graph with 32 nodes. 
Each node uses a 512-sample subset of \underline{CIFAR-10}. Models are trained for 5 epochs with a batch size of 128 and a learning rate of 0.01.
In a 32-node exponential graph, each participant connects with 5 neighbors, where the neighbor in red is set as an anomaly by random \underline{label flipping}, while the other four are normal participants.}
\label{fig: anomaly-cifar10-exp-32-bs128-lr0.01}
\end{figure*}

We can conclude from these experiments that anomalies introduced through random label flipping are readily detectable by analyzing their proximal influence.

\subsubsection{Feature Perturbations}

\begin{figure*}[ht!]
\centering
\includegraphics[width=1\linewidth]{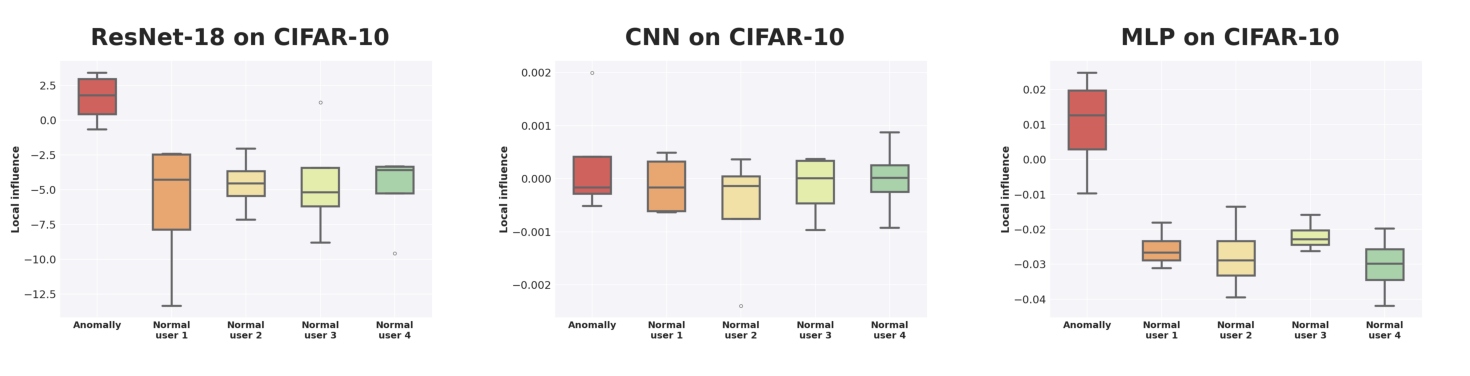}  
\caption{Anomaly detection on exponential graph with 32 nodes. 
Each node uses a 512-sample subset of \underline{CIFAR-10}. Models are trained for 5 epochs with a batch size of 128 and a learning rate of 0.1.
In a 32-node exponential graph, each participant connects with 5 neighbors, where the neighbor in red is set as an anomaly by adding zero-mean \underline{Gaussian noise} with variance equals $100$ on each feature, while the other four are normal participants.}
\label{fig: anomaly-cifar10-exp-32-bs128-noise}
\end{figure*}

\begin{figure*}[ht!]
\centering
\includegraphics[width=1\linewidth]{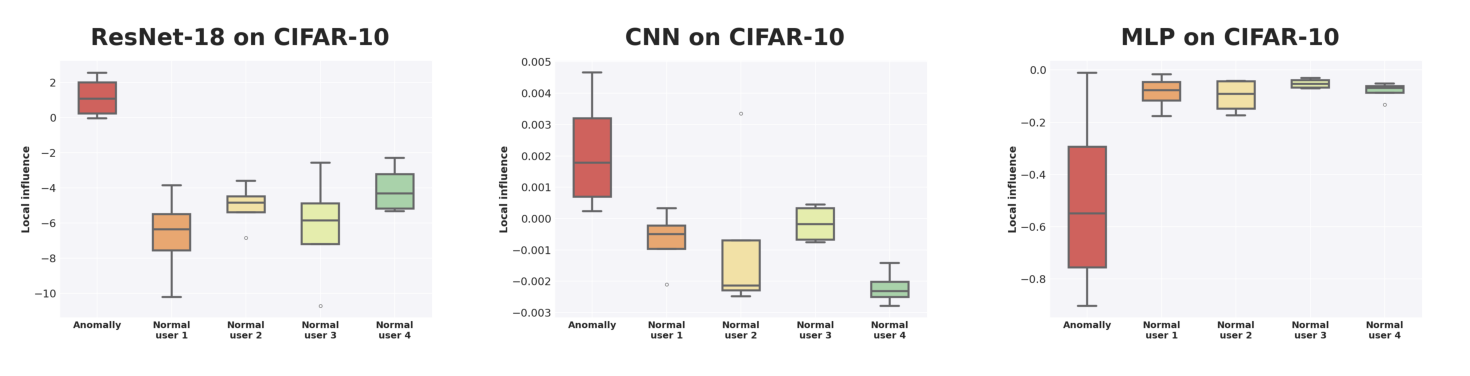}  
\caption{Anomaly detection on exponential graph with 32 nodes. 
Each node uses a 512-sample subset of \underline{CIFAR-10}. Models are trained for 5 epochs with a batch size of 64 and a learning rate of 0.01.
In a 32-node exponential graph, each participant connects with 5 neighbors, where the neighbor in red is set as an anomaly by adding zero-mean \underline{Gaussian noise} with variance equals $100$ on each feature, while the other four are normal participants.}
\label{fig: anomaly-cifar10-exp-32-bs64-noise}
\end{figure*}

\begin{figure*}[ht!]
\centering
\includegraphics[width=1\linewidth]{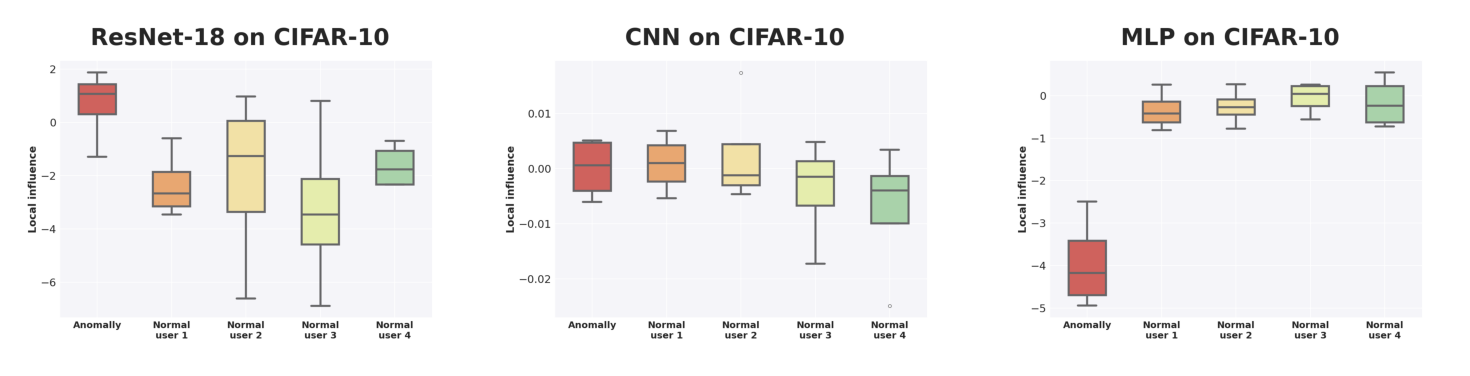}  
\caption{Anomaly detection on exponential graph with 32 nodes. 
Each node uses a 512-sample subset of \underline{CIFAR-10}. Models are trained for 5 epochs with a batch size of 16 and a learning rate of 0.1.
In a 32-node exponential graph, each participant connects with 5 neighbors, where the neighbor in red is set as an anomaly by adding zero-mean \underline{Gaussian noise} with variance equals $100$ on each feature, while the other four are normal participants.}
\label{fig: anomaly-cifar10-exp-32-bs16-noise}
\end{figure*}

We can conclude from \cref{fig: anomaly-cifar10-exp-32-bs128-noise}, \cref{fig: anomaly-cifar10-exp-32-bs64-noise} and \cref{fig: anomaly-cifar10-exp-32-bs16-noise} that most anomalies introduced through adding zero-mean Gaussian noise with high variance are readily detectable by analyzing their proximal influence, which significantly deviates from that of normal data participants.

\newpage
\subsection{Influence cascade}\label{sec: infcascade_appendix}

\subsubsection{One-hop Influence cascade}
\label{sec:infcascade_one_appendix}

The topological dependency of DICE-E in our theory reveals the ``power asymmetries'' \citep{Blau1964ExchangeAP, magee20088} in decentralized learning. To support the theoretical finding, we examine the one-hop DICE-E values of the same batch on participants with vastly different topological importance. 
\cref{fig:influence_cascade} illustrates the one-hop DICE-E influence scores of an identical data batch across participants during decentralized training of a ResNet-18 model on the CIFAR-10 dataset. Node sizes represent the one-hop DICE-E influence scores, quantifying how a single batch impacts other participants in the network. The dominant nodes (e.g., those with larger outgoing communication weights in \( \boldsymbol{W} \)) exhibit significantly higher influence, as shown in \cref{fig:influence_cascade} and further detailed in \cref{fig: cascade-mlp-combined} and \cref{fig: cascade-resnet-combined}. 
These visualizations underscore the critical role of topological properties in shaping data influence in decentralized learning, demonstrating how the structure of the communication matrix \( \boldsymbol{W} \) determines the asymmetries in influence.

To better observe and showcase the ``influence cascade" phenomenon, we design a communication matrix with one ``dominant'' participant (node \(0\)), two ``subdominant'' participants (nodes \(7\) and \(10\)), and several other common participants. \cref{fig: cascade-mlp-mnist-heatmap+graph} (Left) visualizes the communication topology, where node sizes indicate out-degree, reflecting their influence, and edge thickness represents the strength of communication links. Node \(0\) stands out as the dominant participant with the largest size, while nodes \(7\) and \(10\) serve as subdominant intermediaries. \cref{fig: cascade-mlp-mnist-heatmap+graph} (Right) complements this by showing the adjacency matrix \( \boldsymbol{W} \) as a heatmap, where the color intensity highlights the magnitude of connection strengths, with the dominant participant exhibiting strong outgoing links across the network. Together, these visualizations highlight the hierarchical structure and asymmetries in the communication matrix, crucial for understanding topological influences in decentralized learning.



\begin{figure*}[ht!]
\centering
\begin{minipage}[t]{0.48\textwidth}
    \centering
    \includegraphics[width=\linewidth]{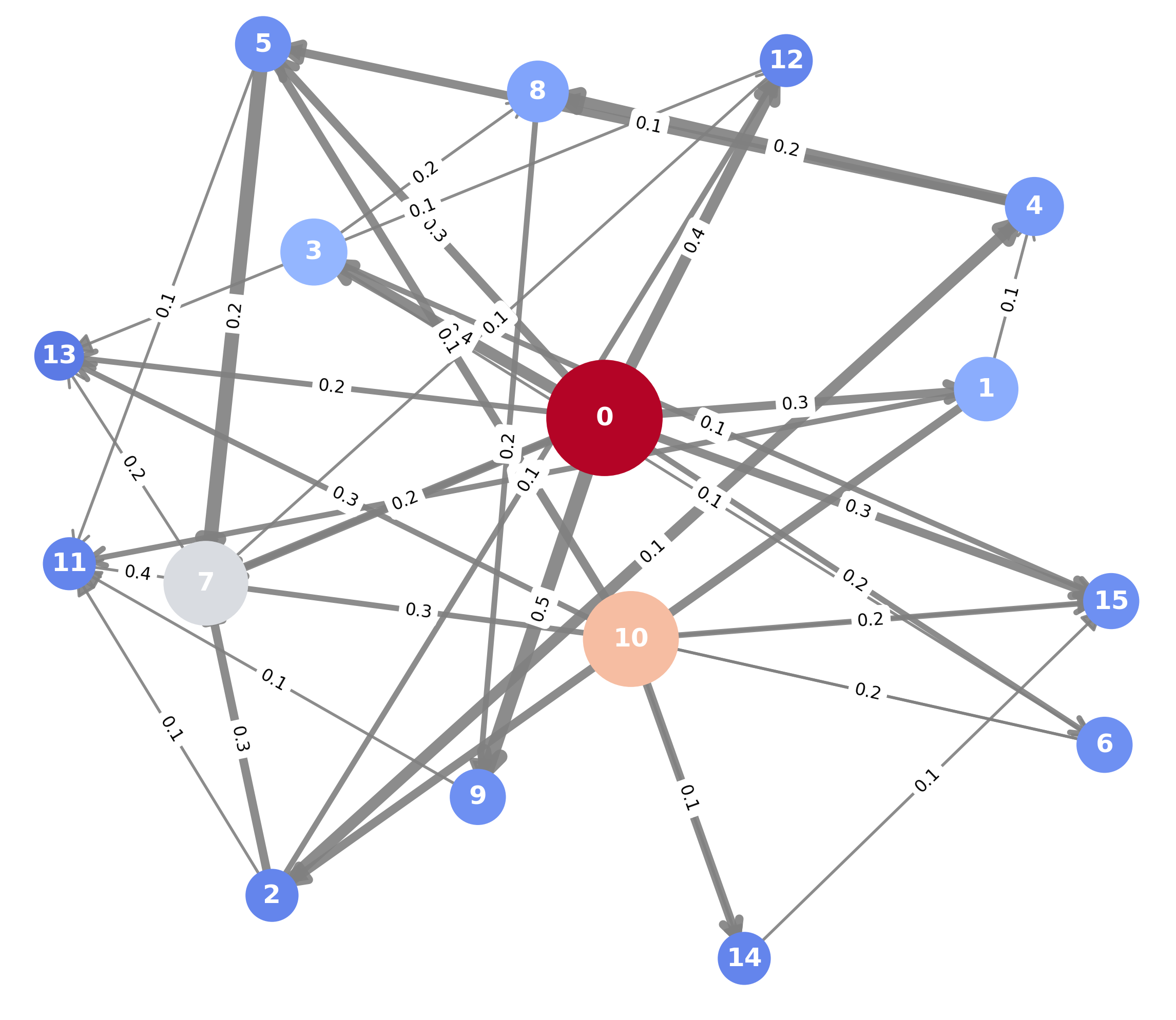}
    \label{fig: cascade-mlp-mnist-graph}
\end{minipage}
\hfill
\begin{minipage}[t]{0.46\textwidth}
    \centering
    \includegraphics[width=\linewidth]{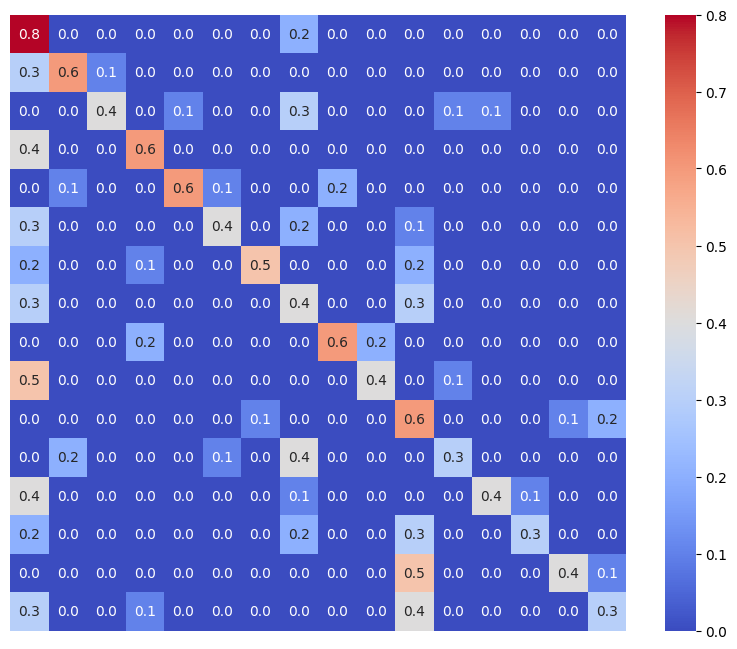}
    \label{fig: cascade-mlp-mnist-heatmap}
\end{minipage}
\caption{\textbf{Left}: Visualization of the communication topology used in \cref{sec: infcascade}, where each node represents a participant, and edges indicate communication links. \underline{Node sizes} are proportional to their out-degree (sum of outgoing edge weights), reflecting their communication influence within the community. \underline{Edge thickness} corresponds to the strength of connection (i.e., weight), with directional arrows capturing the flow of information between participants. Self-loops are omitted for simplicity. \textbf{Right}: Heatmap representation of the weighted adjacency matrix $\boldsymbol{W}$ used in \cref{sec: infcascade}, where each entry $\boldsymbol{W}_{k,j}$ quantifies the communication strength from participant $j$ to $k$. The color intensity represents the magnitude of the weights.}
\label{fig: cascade-mlp-mnist-heatmap+graph}
\end{figure*}

\begin{figure*}[ht!]
\centering
\begin{subfigure}[t]{0.49\linewidth}
    \centering
    \includegraphics[width=0.85\linewidth]{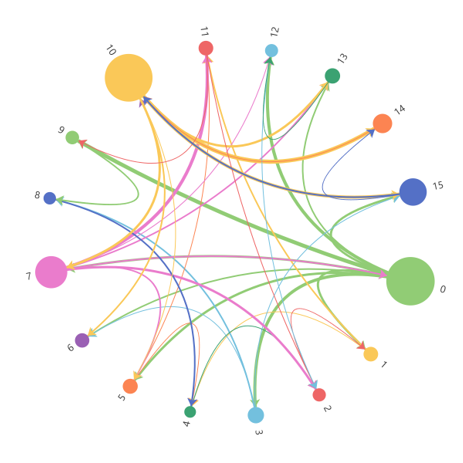}
    \label{fig: cascade-mlp-mnist}
\end{subfigure}
\hfill
\begin{subfigure}[t]{0.49\linewidth}
    \centering
    \includegraphics[width=0.85\linewidth]{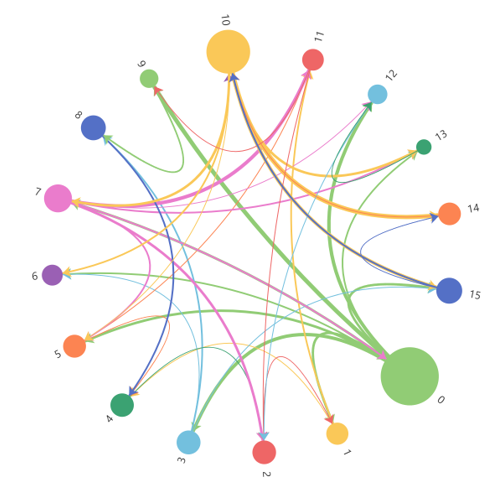}
    \label{fig: cascade-mlp-cifar10}
\end{subfigure}
\caption{Visualization of one-hop influence cascade during decentralized trainingg with MLP on MNIST (left) and CIFAR-10 (right) under a designed communication matrix (see \cref{fig: cascade-mlp-mnist-heatmap+graph}). The thickness of edges represents the strength of communication links (i.e., weights in \( \boldsymbol{W} \)), while \underline{node sizes} correspond to the relative one-hop DICE-E influence scores (see \cref{th:dice_e_1}) computed for the same data batch across different participants. The numerical labels on the nodes indicate the corresponding participants, aligning with the participant indices in \cref{fig: cascade-mlp-mnist-heatmap+graph}.}
\label{fig: cascade-mlp-combined}
\end{figure*}




\begin{figure*}[ht!]
\centering
\begin{subfigure}[t]{0.49\linewidth}
    \centering
    \includegraphics[width=0.85\linewidth]{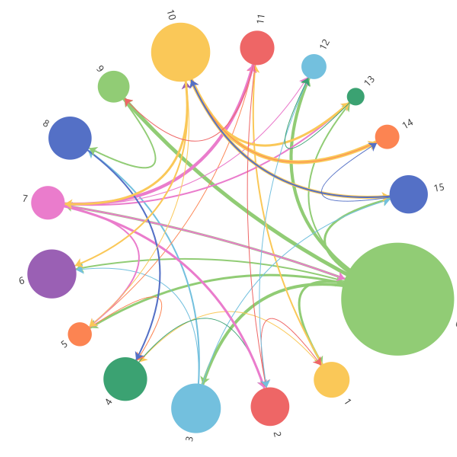}
    \label{fig: cascade-resnet-cifar10}
\end{subfigure}
\hfill
\begin{subfigure}[t]{0.49\linewidth}
    \centering
    \includegraphics[width=0.85\linewidth]{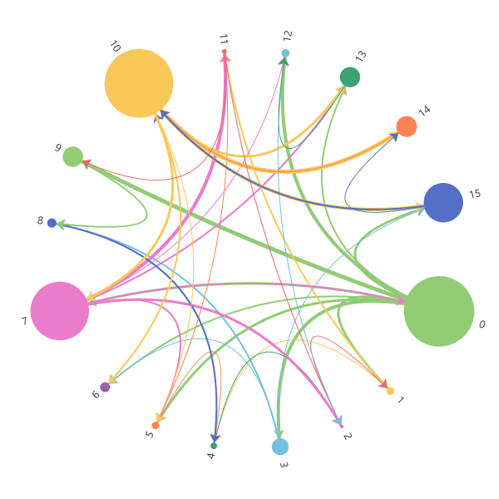}
    \label{fig: cascade-resnet-cifar100}
\end{subfigure}
\caption{Visualization of one-hop influence cascade during decentralized trainingg with ResNet-18 on CIFAR-10 (left) and CIFAR-100 (right) under a designed communication matrix (see \cref{fig: cascade-mlp-mnist-heatmap+graph}). The thickness of edges represents the strength of communication links (i.e., weights in \( \boldsymbol{W} \)), while \underline{node sizes} correspond to the relative one-hop DICE-E influence scores (see \cref{th:dice_e_1}) computed for the same data batch across different participants. The numerical labels on the nodes indicate the corresponding participants, aligning with the participant indices in \cref{fig: cascade-mlp-mnist-heatmap+graph}.}
\label{fig: cascade-resnet-combined}
\end{figure*}

\subsubsection{Multi-hop Influence cascade}

To better illustrate the communication structure underlying the \textit{influence cascade phenomenon in multi-hop} decentralized learning (see \cref{fig:influence_cascade}), following the setup in \cref{sec:infcascade_one_appendix}, with the only modification being the use of a different mixing matrix. This modification is specifically designed to refine the visualization for better geographic representation, making the spatial relationships of decentralized participants more apparent. The heatmap in \cref{fig: cascade-cifar10-heatmap-mainfigure} visualizes the corresponding mixing matrix (i.e., weighted adjacency matrix) \( \boldsymbol{W} \) for the 16-node topology, where each entry \( W_{j,k} \) represents the communication strength from participant \( j \) to \( k \). The color intensity encodes the magnitude of these weights, with warmer colors indicating stronger connections.

\begin{figure*}[h!]
  \centering
\includegraphics[width=0.46\linewidth]{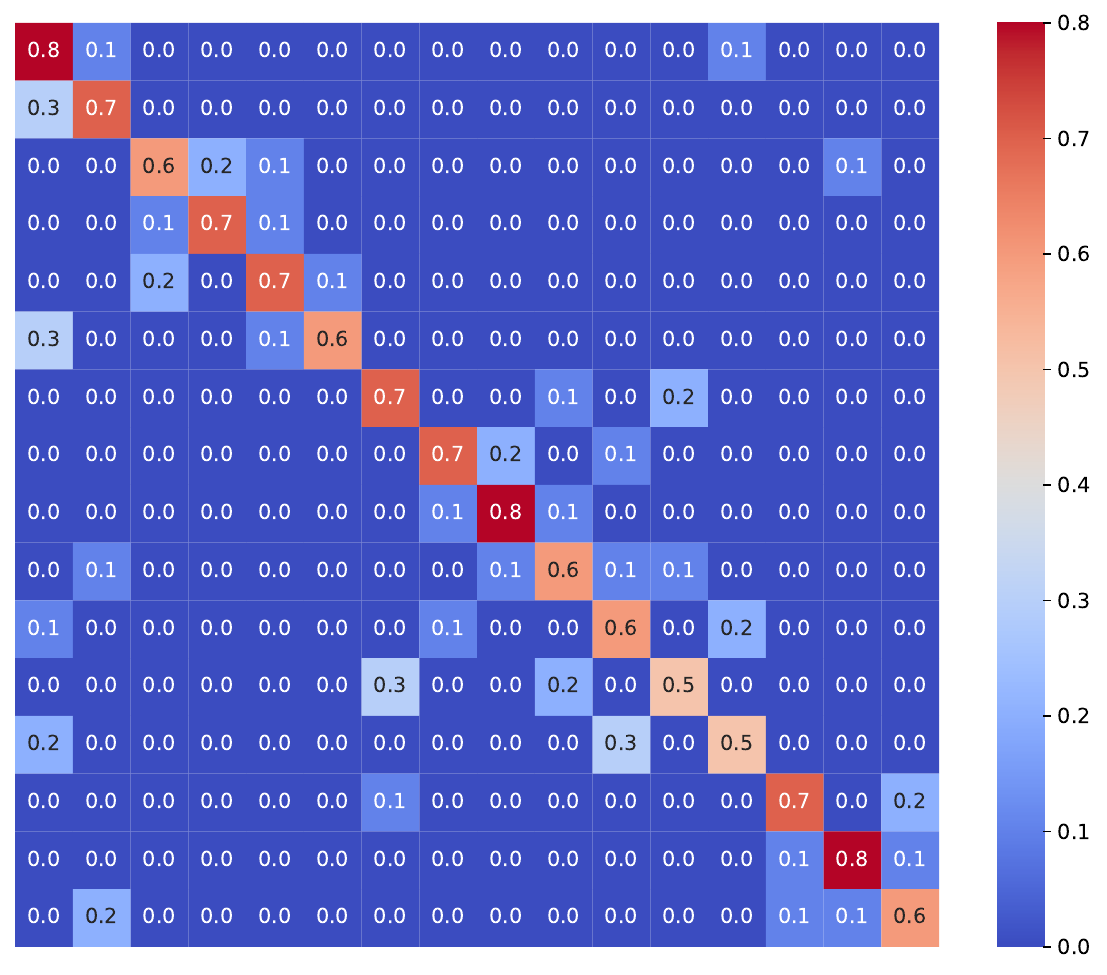}  
  \caption{Heatmap representation of the weighted adjacency matrix $\boldsymbol{W}$ used in \cref{fig:influence_cascade}, where each entry $\boldsymbol{W}_{k,j}$ quantifies the communication strength from participant $j$ to $k$. The color intensity represents the magnitude of the weights.}
\label{fig: cascade-cifar10-heatmap-mainfigure}
\end{figure*}

\end{document}